\def\eqref#1{equation~\ref{#1}}
\def\1{\bm{1}}
\DeclareMathAlphabet{\mathsfit}{\encodingdefault}{\sfdefault}{m}{sl}
\SetMathAlphabet{\mathsfit}{bold}{\encodingdefault}{\sfdefault}{bx}{n}
\providecommand{\tabularnewline}{\\}
\providecommand{\algorithmname}{Algorithm}
\theoremstyle{plain}
\newtheorem{thm}{\protect\theoremname}
\theoremstyle{plain}
\newtheorem{lem}{\protect\lemmaname}
\providecommand{\lemmaname}{Lemma}
\providecommand{\theoremname}{Theorem}
\title{Deep-ICE: The first globally optimal algorithm for minimizing 0–1
	Loss in two-Layer ReLU and Maxout networks}
\author{Xi He\thanks{Designed the core algorithms, provided theoretical proofs, conducted the main experiments, and wrote the manuscript.}\\
School of Computer Science\\
Peking University\\
Beijing, China \\
\texttt{xihe@pku.edu.cn} \\
\And
  Yi Miao\thanks{Implemented the CUDA version of Deep-ICE algorithm, and co-investigated the ordered generation and memory-free techniques.}\\
School of Computer Science\\
University of Birmingham\\
Birmingham, B15 2TT, UK \\
\texttt{yxm296@student.bham.ac.uk} \\
\And
Max A. Little\thanks{Initiated the project and provided supervision and critical feedback throughout the research and writing process.} \\
School of Computer Science\\
University of Birmingham\\
Birmingham, B15 2TT, UK\\
\texttt{maxl@mit.edu} \\
}
\begin{document}

\maketitle

\begin{abstract}
This paper introduces the first globally optimal algorithm for the
empirical risk minimization problem of two-layer maxout and ReLU networks,
i.e., minimizing the number of misclassifications. The algorithm has
a worst-case time complexity of $O\left(N^{DK+1}\right)$, where $K$
denotes the number of hidden neurons and $D$ represents the number
of features. It can be can be generalized to accommodate arbitrary
computable loss functions without affecting its computational complexity.
Our experiments demonstrate that the proposed algorithm provides provably
exact solutions for small-scale datasets. To handle larger datasets,
we introduce a heuristic method that reduces the data size to a manageable
scale, making it feasible for our algorithm. This extension enables
efficient processing of large-scale datasets and achieves significantly
improved performance in both training and prediction, compared to state-of-the-art approaches
(neural networks trained using gradient descent and support vector
machines), when applied to the same models (two-layer networks with
fixed hidden nodes and linear models). 

The artifacts of the Deep-ICE algorithm can be found in 
\url{https://github.com/XiHegrt/DeepICE-algorithm-artifacts}.
\end{abstract}

\section{Introduction}

In recent years, neural networks have emerged as an extremely useful supervised learning technique, developed from early origins in the perceptron learning algorithm for classification problems. This model has revolutionized nearly every scientific field involving data analysis and has become one of the most widely used machine learning techniques today. Our work focuses on developing \textit{interpretable models} for high-stakes applications, where even minor errors can lead to catastrophic consequences. For example, an incorrectly denied parole may result in innocent people suffering years of imprisonment due to racial bias \citep{kirchner2016machine}, poor bail decisions can lead to the release of dangerous criminals, and machine learning–based pollution models have misclassified highly polluted air as safe to breathe \citep{mcgough2018bad}. In such settings, it is crucial to deploy models that are both accurate and transparent. 

One effective way to achieve this is to identify the best interpretable model within a given hypothesis set—a task that is uniquely suited to global optimal (exact) algorithms. Two-layer networks possess \textit{rich expressivity}, capable of representing any continuous function \citep{kolmogorov1957representations}, while remaining \textit{interpretable}\footnote{Interpretability is a domain-specific notion, so there cannot be an all-purpose definition. As \cite{rudin2019stop} noted ``Usually, however, an interpretable machine learning model is constrained in model form so that it is either useful to someone, or obeys structural knowledge.'' We claim 2-layer ReLU/Maxout networks are interpretable because: 1.\textbf{ Shallow architecture enables direct inspection}, a 2-layer neural network has a simple, transparent structure. The output is just a linear combination of these hidden unit activations. 2. \textbf{Geometric interpretation of ReLU/Maxout network is clear}, with nonlinear activations like ReLU, each hidden neuron represents a hyperplane decision boundary in the input space. The network, therefore, partitions the input space into piecewise linear regions. } since the output is a linear combination of hidden units. Consequently, the empirical risk minimization (ERM) problem for two-layer networks with ReLU or Maxout activation functions is not only practically useful but also theoretically significant, as it provides a foundation for understanding deep networks.

However, finding the ERM solution of a neural network remains extremely challenging.
\citet{goel2020tight} showed that minimizing the training error of
two-layer ReLU networks under squared loss is NP-hard, even in the
realizable setting (i.e., determining whether zero misclassification
is achievable). This result was later extended to $L^{p}$ loss with
$0\leq p<\infty$ \citep{froese2022computational,hertrich2022facets}.
In practice, this difficulty is further compounded when optimizing
discrete loss functions, such as the 0-1 loss (count the number of
misclassification), since the ultimate goal typically involves comparing
classification accuracy. Even in the simplest case—linear classification
using a single hyperplane—the problem of minimizing discrete losses
such as the 0-1 loss is NP-hard. The best-known exact algorithm for
0-1 loss linear classification has a worst-case time complexity of
$O\left(N^{D+1}\right)$, where $N$ is the number of data $D$ is
the number of features \citep{he2023efficient}.

Nevertheless, since neural networks (NNs) have finite VC-dimension
\citep{bartlett2019nearly}, they can, in principle, be trained exactly
in polynomial time \citep{mohri2012foundations}. The closest related
work is that of \citet{arora2016understanding}, who proposed a one-by-one
enumeration strategy to train a two-layer ReLU NN to global optimality
for convex objective functions. \citet{hertrich2022facets} later
extended their result to concave loss functions. However, both studies
provide only pseudocode and a vague complexity analysis, without publicly
available implementations or empirical validation. Moreover, they
do not show how to enumerate the hyperplane partitions; instead, they
assume these partitions are given.

\citet{arora2016understanding} further claim, somewhat ambiguously,
that their algorithm has a complexity of $O\left(2^{K}N^{DK}\mathit{poly}\left(N,D,K\right)\right)$
for a two-layer ReLU network with $K$ hidden neurons with respect to $N$ data points in $\mathbb{R}^D$. The term ``$\mathit{poly}\left(N,D,K\right)$''
is not explicitly defined; it refers to the complexity of solving a \textit{convex quadratic programming problem} with $K$ and $D$ variables and $N\times K$ constraints, and is therefore polynomial in $N$, which we denote as $O(C_{1}N^{C_{2}})$.
Therefore, \citet{arora2016understanding}'s algorithm involves not
only extremely large exponents ($D\times K+C_2$) but also formidable
constant factors ($2^{K}\times C_1$). 

As a result of the ambiguous algorithmic description and complexity analysis, the methods proposed by  \citet{arora2016understanding} and \citet{hertrich2022facets} appear more like a \textit{conjecture}—suggesting the existence of a polynomial-time algorithm—rather than practically executable solutions. The prohibitive complexity in both the exponent and constant terms renders their algorithms impractical even for small-scale problems. This is further highlighted by the absence of any implementation in the \textit{eight years} since their initial publication. Moreover, their algorithms are limited to convex loss functions, while the fundamental objective of classification is to minimize the number of misclassified instances, i.e., the 0-1 loss.

Interestingly, \citet{bai2023efficient} show that training a ReLU
network with an $L^{2}$-regularized convex loss objective can be
reformulated as a convex program and solved using a general-purpose
solver. However, a major limitation of such solvers is their unpredictable
computational complexity. Moreover, \citet{bai2023efficient} consider
a much simpler problem than optimizing the 0–1 loss—the original objective
in classification—whose discrete nature makes it substantially more
difficult to optimize. Empirical results from \citet{Xi_Exact_0-1_loss_2023}
further demonstrate that even for the simplest network—the linear
classifier—using a general-purpose solver to optimize the 0–1 loss
exhibits highly unpredictable behavior and can incur exponential complexity,
even in situations where a polynomial-time solution exists.

To address these limitations, this paper introduces the\emph{ first
	globally optimal algorithm for minimizing 0–1 loss in two-Layer ReLU
	and Maxout networks}. Our contributions can be summarized as follows:
\begin{itemize}
	\item \textbf{First optimal algorithm for 0-1 loss}. We present the first
	optimal algorithm for the empirical risk minimization problem of two-layer
	maxout and ReLU networks under the 0–1 loss. In contrast, prior method
	\citet{arora2016understanding,hertrich2022facets} are restricted
	to convex loss functions, which are comparatively easier to optimize
	than discrete losses such as the 0–1 loss. Our algorithm extends to
	any computable loss function by adapting the results of \citet{he2023efficient}
	without increasing worst-case complexity.
	\item \textbf{Two versions of the DeepICE algorithm}. Existing methods \citep{arora2016understanding,hertrich2022facets}
	rely on hidden assumptions. In practice, generating hyperplane predictions
	requires substantial computation, yet their pseudocode initializes
	all partitions directly without such effort. Moreover, their complexity
	analyses are ambiguous, hindering both understanding and reproducibility.
	Consequently, no implementation has emerged in the eight years since
	their publication. In contrast, by leveraging a general formalism,
	our algorithm admits a concise and unambiguous definition in a single
	equation (\ref{fig:voicemap}). We further provide two variants of
	the DeepICE algorithm: the \textbf{sequential version} (Algorithm
	\ref{alg:Deep-ICE-sequential}) which reuses hyperplane predictions
	via memoization, and the \textbf{divide-and-conquer} version (Algorithm
	\ref{alg:Deep-ICE-D=000026C}), which supports parallelization without
	inter-processor communication.
	\item \textbf{Improved computational complexity}. Our algorithm achieves
	a complexity of $O\left(2^{K-1}\times N^{DK+1}+N^{D}\times D^{3}\right)$,
	substantially better than the approaches of \citet{arora2016understanding}
	and \citet{hertrich2022facets}, which require $O\left(2^{K}\times C_{1}\times N^{DK+C_{2}}\right)$
	in both the best and worst cases. In addition, our algorithm exhibits\emph{
		significantly smaller constant factors}. This efficiency enables exact
	solutions for datasets with formidable combinatorial complexity—for
	example, the problem in Figure \ref{fig:voicemap}, which involves
	\emph{122,468,448,960 }configurations, can be solved within \textbf{minutes}
	using our CUDA implementation.
	\item \textbf{Robustness}. When combined with heuristics for large-scale
	problems, and training accuracy is significantly higher than that of SVMs or DNNs trained
	with gradient descent, our algorithm demonstrates strong out-of-sample performance. This result challenges the widely held belief
	that optimal algorithms necessarily overfit the training data.
\end{itemize}
The remainder of this paper is organized as follows. Section 2 presents
our main theoretical contributions: Section 2.1 introduces the necessary
background; Section 2.2 explains how geometric insights simplify the
combinatorics of the problem; Section 2.3 describes the construction
of an efficient recursive nested combination generator, which is the
core component of the Deep-ICE algorithm; and Section 2.4 presents
the fusion law for the Deep-ICE algorithm. Section 3 reports empirical
results. Finally, Section 4 summarizes our contributions and outlines
directions for future research.

\section{Theory}

\subsection{Theory of lists}

\paragraph{List homomorphisms}

The \emph{cons-list} is defined as $\mathit{ListR}\left(A\right)=\left[\;\right]\mid A:\mathit{ListR}\left(A\right)$;
that is, a list is either an empty list $\left[\;\right]$ or a pair
consisting of a head element $a:A$ and a tail $x:ListR\left(A\right)$,
concatenated using the cons operator $:$. For example, $1:\left[2,3\right]=\left[1,2,3\right]$.
This cons-list corresponds to the singly linked list data structure
in imperative languages. The key difference here is that we are referring
to the model of the data structure—i.e., the datatype—rather than
a specific implementation. There is a corresponding \emph{homomorphism}
over the cons-list datatype, which is a \emph{structure-preserving}
\emph{map} satisfying

\begin{equation}
	\begin{aligned}h & \left(\left[\;\right]\right)=alg_{1}\left(\left[\;\right]\right)\\
		h & \left(a:x\right)=alg_{2}\left(a,h\left(x\right)\right)
	\end{aligned}
\end{equation}
where $h:ListR\left(A\right)\to X$ . In other words, a homomorphism
over a cons-list is simply a recursion that sequentially combines
each element $a$ with the accumulated result $h\left(x\right)$ using
the algebra $alg$.

Alternatively, another list model called the \emph{join-list} is defined
as $\mathit{ListJ}\left(A\right)=\left[\;\right]\mid A\mid\mathit{ListJ}\left(A\right)\cup\mathit{ListJ}\left(A\right)$.
A join-list is either empty, a singleton list, or the result of joining
two sublists. The join operator $\cup$ is associative, i.e., for
any $x,y:\mathit{ListJ}$, we have: $x\cup\left[a\right]\cup y=\left(x\cup\left[a\right]\right)\cup y=x\cup\left(\left[a\right]\cup y\right)$.
The corresponding homomorphism over join-lists is a structure-preserving
map defined as
\begin{equation}
	\begin{aligned}h & \left(\left[\;\right]\right)=alg_{1}\left(\left[\;\right]\right)\\
		h & \left(\left[a\right]\right)=alg_{2}\left(\left[a\right]\right)\\
		h & \left(x\cup y\right)=alg_{3}\left(h\left(x\right),h\left(y\right)\right)
	\end{aligned}
\end{equation}
An example of a join-list homomorphism that computes the length of
a list uses the definitions $alg_{1}\left(\left[\;\right]\right)=0$,
$alg_{2}\left(a\right)=1$, and $alg_{3}\left(x\cup y\right)=h\left(x\right)+h\left(y\right)$.

\paragraph{Fusion laws}

An important principle associated with both cons-list and join-list
homomorphisms is the \emph{fusion} \emph{law}, stated in the following
two theorems. Its correctness can be verified either by using induction
\citep{bird2020algorithm} or universal property \citep{bird1996algebra}.
For brevity, we omit the proofs here.
\begin{thm}
	Fusion law for the cons-list.\emph{ Let $f$ be a function and let
		$h$ and $g$ be two cons-list homomorphisms defined by the algebras
		$alg$ and $alg^{\prime}$, respectively. The fusion law states that
		$f\circ h=g$ if 
		\begin{equation}
			f\left(alg\left(a,h\left(x\right)\right)\right)=alg^{\prime}\left(a,h\left(x\right)\right).\label{eq: cons-list fusion condition}
		\end{equation}
	}
\end{thm}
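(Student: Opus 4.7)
The plan is to prove the fusion law by structural induction on the cons-list argument, exploiting the recursive definition of $h$. Because the cons-list datatype $\mathit{ListR}$ is generated by the empty list $\left[\;\right]$ and the cons constructor $:$, it suffices to verify the pointwise equality $f(h(z))=g(z)$ on these two shapes of $z$, whereupon extensionality delivers $f\circ h=g$.

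For the base case, I would show $f(h(\left[\;\right]))=g(\left[\;\right])$ by unfolding $h(\left[\;\right])=alg_{1}(\left[\;\right])$ and using the base-case portion of the fusion condition, namely $f(alg_{1}(\left[\;\right]))=alg_{1}^{\prime}(\left[\;\right])$, to rewrite the result as $g(\left[\;\right])$ by folding the definition of $g$. For the inductive step, assuming $f(h(x))=g(x)$, I would derive the four-step chain $f(h(a:x))=f(alg_{2}(a,h(x)))=alg_{2}^{\prime}(a,f(h(x)))=alg_{2}^{\prime}(a,g(x))=g(a:x)$, where the first equality unfolds $h$, the second applies the fusion condition, the third invokes the inductive hypothesis, and the fourth folds $g$.

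The main obstacle is essentially notational rather than mathematical: the fusion condition as displayed has $h(x)$ on both sides, but for the induction to close, the right-hand occurrence must be read as $f(h(x))$, which is the standard formulation in the Bird--de Moor calculus. Once this reading is adopted, each case reduces to a one-line calculation with no further work required. An alternative route via the universal property of $\mathit{ListR}$ would characterise $g$ as the unique homomorphism with algebra $alg^{\prime}$ and identify $f\circ h$ with $g$ immediately by uniqueness, but structural induction is the more elementary presentation and is what I would record in the write-up.
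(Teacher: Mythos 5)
Your proposal is correct and follows exactly the route the paper itself points to (structural induction over the cons-list, as in Bird's treatment), the paper having omitted the proof for brevity. Your remark about the fusion condition is also well taken: as printed, the right-hand side should be read as $alg^{\prime}\left(a,f\left(h\left(x\right)\right)\right)$ rather than $alg^{\prime}\left(a,h\left(x\right)\right)$, since otherwise the inductive step does not close; with that standard reading your two-case calculation is complete.
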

Similarly, the fusion condition for the join-list is defined as following.
\begin{thm}
	Fusion law for the join-list.\emph{ Let $f$ be a function and let
		$h$ and $g$ be two join-list homomorphisms defined by the algebras
		$alg$ and $alg^{\prime}$ respectively. The fusion law states that
		$f\circ h=g$ if}
	
	\begin{equation}
		f\left(alg\left(\left(h\left(x\right),h\left(y\right)\right)\right)\right)=alg^{\prime}\left(f\left(h\left(x\right)\right),f\left(h\left(y\right)\right)\right).\label{eq: join-list fusion condition}
	\end{equation}
	\emph{In point-free style}\footnote{Point-free is a style of defining functions without explicitly mentioning
		their arguments.}\emph{, this can be expressed more succinctly as $f\circ alg=alg^{\prime}\circ f\times f$,
		where $f\times g\left(x,y\right)=\left(f\left(x\right),g\left(y\right)\right)$.\label{thm: join-list fusion law}}
\end{thm}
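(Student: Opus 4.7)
The plan is to prove $f\circ h=g$ by structural induction on the join-list, following the three-constructor definition of $\mathit{ListJ}\left(A\right)$. Since $h$ and $g$ are both join-list homomorphisms, it suffices to check that $f\circ h$ and $g$ agree on each constructor: $\left[\;\right]$, $\left[a\right]$, and $x\cup y$. The two base cases reduce to the (implicit) fusion conditions on the base algebras, namely $f\left(alg_{1}\left(\left[\;\right]\right)\right)=alg_{1}^{\prime}\left(\left[\;\right]\right)$ and $f\left(alg_{2}\left(\left[a\right]\right)\right)=alg_{2}^{\prime}\left(\left[a\right]\right)$, which I would list explicitly as companions to the displayed fusion condition; in the theorem as stated these are tacit.

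For the inductive step on $x\cup y$, I would assume $f\left(h\left(x\right)\right)=g\left(x\right)$ and $f\left(h\left(y\right)\right)=g\left(y\right)$, then expand $f\left(h\left(x\cup y\right)\right)$ using the homomorphism equation for $h$ to obtain $f\left(alg_{3}\left(h\left(x\right),h\left(y\right)\right)\right)$, apply the displayed fusion condition to rewrite this as $alg_{3}^{\prime}\left(f\left(h\left(x\right)\right),f\left(h\left(y\right)\right)\right)$, substitute the induction hypotheses to get $alg_{3}^{\prime}\left(g\left(x\right),g\left(y\right)\right)$, and finally refold via the homomorphism equation for $g$ into $g\left(x\cup y\right)$. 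Combining the three cases yields $f\circ h=g$ as functions.

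The main obstacle is interpreting the induction correctly rather than performing it. Because $\cup$ is associative, one underlying sequence corresponds to many syntactically distinct join-expressions, so for $h$ and $g$ to be well-defined on sequences (rather than merely on parse trees) the algebras $alg_{3}$ and $alg_{3}^{\prime}$ must themselves respect associativity. With this compatibility in place, induction on the parse-tree constructors goes through unchanged. A slicker route, hinted at in \citep{bird1996algebra}, is to appeal to the universal property of the initial join-list algebra: $g$ is by definition the unique homomorphism into $\left(alg_{1}^{\prime},alg_{2}^{\prime},alg_{3}^{\prime}\right)$, and the fusion condition (together with its base-case companions) is exactly what makes $f\circ h$ into another such homomorphism, so uniqueness forces $f\circ h=g$. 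The point-free restatement $f\circ alg=alg^{\prime}\circ\left(f\times f\right)$ is the naturality square expressing precisely that $f$ is an algebra morphism, making this categorical route particularly clean.
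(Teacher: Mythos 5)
Your induction argument (with the correctly noted tacit base-case conditions and the associativity/well-definedness caveat) and your alternative appeal to the universal property are both sound and complete. The paper itself omits the proof, citing exactly these two standard routes—induction \citep{bird2020algorithm} or the universal property \citep{bird1996algebra}—so your proposal matches the intended argument.
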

Equations (\ref{eq: cons-list fusion condition}) and (\ref{eq: join-list fusion condition})
are referred to as the \emph{fusion} \emph{condition}, which forms the basis for proving the correctness of the derived algorithm. 

\subsection{Problem specification}

Assume we are given a data list $ds=\left[\boldsymbol{x}_{1},\boldsymbol{x}_{2}\ldots,\boldsymbol{x}_{N}\right]:\left[\mathbb{R}^{D}\right]$,
where the points are in general position (i.e., no $d+1$ points lie
on the same $\left(d-1\right)$-dimensional affine subspace of $\mathbb{R}^{D}$),
and $D\geq2$. We associate each data point $\boldsymbol{x}_{n}$
with a true label $t_{n}\in\left\{ 1,-1\right\} $. We extend the
ReLU activation function to vectors $\boldsymbol{x}\in\mathbb{R}^{D}$
via an entry-wise operation $\sigma\left(\boldsymbol{x}\right)=\left(\max\left(0,x_{1}\right),\max\left(0,x_{2}\right),\ldots,\max\left(0,x_{D}\right)\right)$.

Now, consider a two-layer feedforward ReLU NN with $K$ hidden units.
Each hidden node is associated with an affine transformation $f_{\boldsymbol{w}_{k}}:\mathbb{R}^{D+1}\to\mathbb{R}$,
which corresponds to a homogeneous hyperplane $h_{k}$ with normal
vector $\boldsymbol{w}_{k}\in\mathbb{R}^{D+1},\forall k\in\mathcal{K}=\left\{ 1,2,\ldots,K\right\} $.
These $K$ affine transformations can be represented by a single affine
transformation $f\left(\boldsymbol{W}_{1}\right):\mathbb{R}^{D+1}\to\mathbb{R}^{K}$,
where $\boldsymbol{W}_{1}\in\mathbb{R}^{K\times\left(D+1\right)}$,
with rows given by the vectors $\boldsymbol{w}_{k}$, i.e., $\boldsymbol{W}_{1}^{T}=\left(\boldsymbol{w}_{1},\boldsymbol{w}_{2},\ldots,\boldsymbol{w}_{K}\right)$.
The output of the hidden layer is then passed through the ReLU activation,
followed by a linear transformation $f\left(\boldsymbol{W}_{2}\right):\mathbb{R}^{K}\to\mathbb{R}$,
where $\boldsymbol{W}_{2}=\left(\alpha_{1},\alpha_{2},\ldots,\alpha_{K}\right)$
are the weights connecting the hidden layer to the output node. Thus,
the decision function $f_{\text{ReLU}}$ implemented by the network
is given by
\begin{equation}
	f_{\text{ReLU}}\left(\boldsymbol{W}_{1},\boldsymbol{W}_{2}\right)=f\left(\boldsymbol{W}_{2}\right)\circ\sigma\circ f\left(\boldsymbol{W}_{1}\right).\label{eq: 2-layer network decision function}
\end{equation}
Alternatively, instead of applying the ReLU activation function $\sigma$
followed by a linear transformation $f\left(\boldsymbol{W}_{2}\right)$,
the rank-$K$ maxout network with a single maxout neuron, replaces
both components with a maximum operator $\max_{\mathcal{K}}:\mathbb{R}^{K}\to\mathbb{R}$.
The resulting decision function is given by
\begin{equation}
	f_{\text{maxout}}\left(\boldsymbol{W}_{1}\right)=\max_{\mathcal{K}}\circ f\left(\boldsymbol{W}_{1}\right)
\end{equation}
Let $\mathcal{S}$ denote the \emph{combinatorial search space}. For
the ReLU and maxout networks, we define the configurations as $s_{\text{ReLU}}=\left(\boldsymbol{W}_{1},\boldsymbol{W}_{2}\right)\in\mathcal{S}_{\text{ReLU}}$
and $s_{\text{maxout}}=\boldsymbol{W}_{1}\in\mathcal{S}_{\text{maxout}}$,
respectively. The ERM problem for both network types can then be formulated
as the following optimization
\begin{equation}
	s^{*}=\underset{s\in\mathcal{S}}{\text{argmin}}E_{\text{0-1}}\left(s\right),\label{eq: ReLU ERM}
\end{equation}
where $E_{\text{0-1}}\left(s_{\text{ReLU}}\right)=\sum_{n\in\mathcal{N}}\boldsymbol{1}\left[\textrm{sign}\left(f_{\text{ReLU}}\left(\boldsymbol{W}_{1},\boldsymbol{W}_{2},\bar{\boldsymbol{x}}_{n}\right)\right)\neq t_{n}\right]$
for ReLU network, and $E_{\text{0-1}}\left(s_{\text{maxout}}\right)=\sum_{n\in\mathcal{N}}\boldsymbol{1}\left[\textrm{sign}\left(f_{\text{maxout}}\left(\boldsymbol{W}_{1},\bar{\boldsymbol{x}}_{n}\right)\right)\neq t_{n}\right]$
for maxout networks. In the following discussion, we primarily focus
on the maxout network, as an efficient speed-up technique is available
in this setting. Unless otherwise stated, when $E_{\text{0-1}}$ is
used it refers to the objective function for the maxout network by
default. Although our algorithm is compatible with any computable
objective function, to enable future acceleration strategies, it is
beneficial to restrict the choice of objective to be a monotonic linear
function of the form: $E_{\text{0-1}}\left(s_{\text{ReLU}}\right)=\sum_{n\in\mathcal{N}}L\left(\bar{\boldsymbol{x}}_{n},t_{n}\right)$,
such that $L\left(\bar{\boldsymbol{x}}_{n},t_{n}\right)\geq0$.

\paragraph{An exhaustive search specification}

Due to the \emph{distributivity} of the ReLU activation function—that
is, $\max\left(0,ab\right)=a\max\left(0,b\right)$, for $a\geq0$—the
decision function introduced by the two-layer ReLU network (\ref{eq: 2-layer network decision function})
can be rewritten as
\begin{equation}
	f_{\text{ReLU}}\left(\boldsymbol{W}_{1},\boldsymbol{W}_{2},\boldsymbol{x}\right)=\sum_{k\in\mathcal{K}}\alpha_{k}\max\left(0,\boldsymbol{w}_{k}\bar{\boldsymbol{x}}\right)=\sum_{k\in\mathcal{K}}z_{k}\max\left(0,\left|\alpha_{k}\right|\boldsymbol{w}_{k}\bar{\boldsymbol{x}}\right),\label{eq: relu decision funciton}
\end{equation}
where $\bar{\boldsymbol{x}}=\left(\boldsymbol{x},1\right)\in\mathbb{R}^{D+1}$
and $z_{k}\in\left\{ 1,-1\right\} $.

Similarly, the point-wise definition of the rank-$K$ maxout neuron
are defined as
\begin{equation}
	f_{\text{MO}}\left(\boldsymbol{W}_{1},\boldsymbol{x}\right)=\underset{k\in\mathcal{K}}{\text{max}}\left(\boldsymbol{w}_{k}\bar{\boldsymbol{x}}\right)\label{eq:maxout decision function}
\end{equation}
The decision function for a two-layer maxout network are simply the
linear combination of maxout neurons: $f_{\text{maxout}}\left(\boldsymbol{W}_{1},\boldsymbol{W}_{2},\boldsymbol{x}\right)=\sum_{k\in\mathcal{K}}\alpha_{k}\left(f_{\text{MO}}\left(\boldsymbol{W}_{1},\boldsymbol{x}\right)\right)$.

From a combinatorial perspective, the direction of the normal vector
does not affect the geometric definition of its associated hyperplane.
Therefore, equations (\ref{eq: relu decision funciton}) and (\ref{eq:maxout decision function})
indicate that the decision boundary of a \emph{two-layer ReLU} or
a single \emph{rank-$K$ maxout} neuron are fundamentally governed
by a $K$-combination of hyperplanes, and then combinations of hyperplanes
are composed again to form deep neural network. Although the set of
all possible hyperplanes in $\mathbb{R}^{D}$ appears to exhibit infinite
combinatorial complexity—since each hyperplane is parameterized by
a continuous-valued normal vector $\boldsymbol{w}_{k}$—the finiteness
of the dataset imposes a crucial constraint: \textbf{only a finite
	number of distinct data partitions} can be induced by these hyperplanes.
This observation introduces a natural notion of \textbf{equivalence}
\textbf{classes} over the space of hyperplanes, where two hyperplanes
are considered in the same equivalence class if they induce the same
partition over the dataset.

Indeed, according to the 0-1 loss linear classification theorem given
by \citet{he2023efficient}, when optimizing the 0-1 loss (i.e., minimizing
the number of misclassified data points), a hyperplanes in $\mathbb{R}^{D}$
an be characterized as the $D$-combinations of data points. Specifically,
each critical hyperplane corresponds to the affine span of $D$ data
points, leading to a total of $\left(\begin{array}{c}
	N\\
	D
\end{array}\right)=O\left(N^{D}\right)$ possible hyperplanes. This result implies that although the parameter
space is continuous, the effective combinatorial complexity of the
0-1 loss classification problem is polynomial in $N$ (for fixed $D$).
Each two-layer network with $K$ hidden neurons induces up to $2^{K}$
distinct partitions of the input space, determined by $2^{K}$ possible
directions of the normal vectors. These configurations can be encoded
as a length-$K$ binary assignment $\mathit{asgn}=\left(a_{1},\ldots a_{K}\right)\in\left\{ 1,-1\right\} ^{K}$.
Accordingly, a two-layer ReLU or maxout network can be characterized
by the pair $\mathit{cnfg}=\left(nc,\mathit{asgn}\right):\left(\mathit{NC},\left\{ 1,-1\right\} ^{K}\right)$,
where $\mathit{nc}:\mathit{NC}=\left[\left[\mathbb{R}^{D}\right]\right]$denotes
a nested combination, representing a $K$-combination of hyperplanes.

Thus, the combinatorial search space of a two-layer NN, denoted $\mathcal{S}\left(N,K,D\right)$
consists of the \emph{Cartesian product} of all possible$K$-combinations
of hyperplanes and the $2^{K}$ binary assignments.A provably correct
algorithm for solving the ERM problem of the two-layer network can
be constructed by exhaustively exploring all configurations in $\mathcal{S}\left(N,K,D\right)$
and selecting the network that minimizes the 0-1 loss. This procedure
is formally specified as
\begin{equation}
	\mathit{DeepICE}\left(D,K\right)=min_{\text{0-1}}\left(K\right)\circ\mathit{eval}\left(K\right)\circ\mathit{cp}\left(\mathit{basgns}\left(K\right)\right)\circ\mathit{\mathit{nestedCombs}}\left(D,K\right)\label{Problem specification}
\end{equation}
where $\mathit{DeepICE}\left(D,K\right):\left[\mathbb{R}^{D}\right]\to\left(NC,\left\{ 1,-1\right\} ^{K}\right)\times\mathit{Css}\times\mathit{NCss}$,
and $\mathit{NCss}=\left[\left[\left[\left[\mathbb{R}^{D}\right]\right]\right]\right]$
and $\mathit{Css}=\left[\left[\left[\mathbb{R}^{D}\right]\right]\right]$,
represent nested combinations and combinations, respectively. For
the parallelization concerns, $\mathit{DeepICE}\left(D,K\right)$
returns not only the optimal configuration for the input dataset $\mathcal{D}$
but also the intermediate representations $\mathit{NCss}$ and $\mathit{Css}$.
In the specification above, the input list $xs:\left[\mathbb{R}^{D}\right]$
is left implicit. The function $\mathit{\mathit{DeepICE}}\left(D,K,\mathit{ds}\right)$
generates \emph{all} \emph{possible} $K$-combinations of hyperplanes
($K$-hidden neuron networks) by and $\mathit{basgns}\left(K\right)$
produces all binary sign assignments of length $K$. These are combined
using the \emph{Cartesian} \emph{product} operator $\mathit{cp}\left(x,y\right)=\left[\left(a,b\right)\mid a\leftarrow x,b\leftarrow y\right]$.
Each resulting network is then evaluated by $\mathit{eval}\left(K\right)$,
which computes the objective value by considering all $2^{K}$ possible
orientations of the hyperplanes and selecting the best. Finally, $min_{\text{0-1}}\left(K\right)$
selects the configuration that minimizes the 0-1 loss.

In \emph{constructive} \emph{algorithmics} community \citep{bird1996algebra},
programs are initially defined as provably correct specifications,
such as—(\ref{Problem specification})—from which efficient implementations
are derived using algebraic laws like fusion. Efficiency arises both
from applying fusion transformations and from designing efficient
generators. To the best of our knowledge, no prior work has explored
generators for nested combinations. Moreover, fusion requires that
the generator be a recursive homomorphism—such as a cons-list or join-list
homomorphism. This precludes the \emph{non-recursive}, one-by-one
generation approach of \citet{arora2016understanding} which offers
opportunity for the application of acceleration techniques.

The key contribution of this paper is the development of an efficient
recursive nested combination generator, $\mathit{nestedCombs}\left(D,K,xs\right)$,
defined over a join-list homomorphism, making it amenable to fusion.
The generator is tailored for efficient vectorized and parallelized
implementations, making it ideal for GPU execution. We further demonstrate
that $min_{\text{0-1}}$, $\mathit{eval}$, and $\mathit{cp}$ are
all fusable with this generator. Additionally, the algorithm eliminates
the need for an initialization step to pre-store all hyperplanes and
continuously produces candidate solutions during runtime, allowing
approximate solutions to be obtained before the algorithm completes.

\subsection{An efficient nested combination generator join-list}

The first step for constructing an efficient nested combinations generator
requires the design of an efficient $K$-combination generator first.
Previously, \citet{he2024ekm} proposed an efficient combination generator,
$\mathit{kcombs}$, based on a join-list homomorphism, which we extend
to develop a nested combination generator. 

The \emph{nested combination-combination} generator is specified
as following

\begin{equation}
	\mathit{nestedCombs}\left(D,K\right)=\left\langle \mathit{setEmpty}\left(D\right),\mathit{kcombs}\left(K\right)\circ!!\left(D\right)\right\rangle \circ\mathit{kcombs}\left(D\right)\label{nested combs specification}
\end{equation}
where $\left\langle f,g\right\rangle \left(a\right)=\left(f\left(a\right),g\left(a\right)\right)$,
and $!!\left(D,\mathit{xs}\right)$ denotes indexing into the $D$th
element of the list $\mathit{xs}$. Equation (\ref{nested combs specification})
has the type $\mathit{nestedCombs}:Int\times Int\times\left[\mathbb{R}^{D}\right]\to\left(Css,NCss\right)$.
It first generates all possible $D$-combinations, and then all size
$D$-combinations which are then used to construct $K$-combinations.
Once this process is complete, the $D$-combinations are no longer
needed and are eliminated by applying $\mathit{setEmpty}\left(D\right)$
, which sets the $D$th element of the list to an empty value.

Although the specification in (\ref{nested combs specification})
is correct, it requires storing the intermediate result returned by
$\mathit{kcombs}\left(D,ds\right)$, which has a size of $O\left(N^{D}\right)$.
Storing all these combinations is both memory-intensive and inefficient.
Instead, if we can \emph{fuse} the function $\left\langle \mathit{setEmpty}\left(D\right),\mathit{kcombs}\left(K\right)\circ\left(!!D\right)\right\rangle $
directly into the $\mathit{\mathit{kcombs}}\left(D\right)$ generator,
the nested combination generator can be redefined as a single recursive
process. This transformation enables incremental generation of nested
combinations, eliminating the need to materialize all combinations
in advance. According to the fusion law \ref{thm: join-list fusion law},
this requires constructing an algebra $\mathit{nestedCombsAlg}$ that
satisfies the following fusion condition
\begin{equation}
	f\circ\mathit{\mathit{kcombsAlg}\left(D\right)}=\mathit{nestedCombsAlg}\left(D,K\right)\circ f\times f
\end{equation}
where $f=\left\langle \mathit{setEmpty}\left(D\right),\mathit{kcombs}\left(K\right)\circ\left(!!D\right)\right\rangle $,
and the definition of $\mathit{kcombsAlg}$ can be found in \citep{he2024ekm}

The derivation of $\mathit{\mathit{nestedCombsAlg}}\left(D,K\right)$
for the empty and singleton cases is relatively straightforward. Since
we assume $D\geq2$, no nested combinations can be constructed in
these cases. For the recursive case—i.e., the third pattern in the
join-list homomorphism—we show that the fusion condition holds when
this third pattern of $\mathit{\mathit{nestedCombsAlg}}\left(D,K\right)$
is defined as 
\begin{align}
	\Big\langle & \mathit{setEmpty}(D) \circ \mathit{KcombsAlg}(K) \circ \mathit{Ffst},\quad \nonumber \\
	& \mathit{KcombsAlg}(K) \circ \Big\langle 
	\mathit{Kcombs}(K) \circ !!\left(D\right) \circ \mathit{KcombsAlg}(D) \circ \mathit{Ffst},\,
	\mathit{KcombsAlg}(K) \circ \mathit{Fsnd}
	\Big\rangle 
	\Big\rangle, \label{nested combs gen-abstract}
\end{align}
where $\mathit{Ffst}\left(\left(a,b\right),\left(c,d\right)\right)=\left(a,c\right)$,
$\mathit{Fsnd}\left(\left(a,b\right),\left(c,d\right)\right)=\left(b,d\right)$.
The proof of the fusion condition is rather complex; for readability,
the complete proof is provided in Appendix \ref{subsec: Proof of nested generator}.
Therefore, we can implement $\mathit{nestedCombsAlg}\left(D,K\right)$
as

\begin{equation}
	\begin{aligned}\mathit{nestedCombsAlg}_{1} & \left(d,k,\left[\;\right]\right)=\left(\left[\left[\left[\;\right]\right]\right],\left[\left[\left[\;\right]\right]\right]\right)\\
		\mathit{nestedCombsAlg}_{2} & \left(d,k,\left[x_{n}\right]\right)=\left(\left[\left[\left[\;\right]\right],\left[\left[x_{n}\right]\right]\right],\left[\left[\left[\;\right]\right]\right]\right)\\
		\mathit{nestedCombsAlg}_{3} & \left(d,k,\left(css_{1},ncss_{1}\right),\left(css_{1},ncss_{1}\right)\right)=\left(\mathit{setEmpty}\left(D,css\right),ncss\right),
	\end{aligned}
\end{equation}
where $css=\mathit{kcombsAlg}\left(D,css_{1},css_{2}\right)$, and
$ncss$ is defined as 
\begin{equation}
	ncss=\begin{cases}
		\left[\left[\left[\;\right]\right]\right] & css!!\left(D\right)=\left[\;\right]\\
		\mathit{kcombsAlg}\left(K,\mathit{kcombsAlg}\left(K,ncss_{1},ncss_{2}\right),\mathit{kcombs}\left(K,css!!\left(D\right)\right)\right) & \text{otherwise}
	\end{cases},
\end{equation}
Thus an efficient recursive program for $nestedCombs$ is defined
as the following join-list homomorphism
\begin{equation}
	\begin{aligned}\mathit{netedCombs} & \left(D,K,\left[\;\right]\right)=\mathit{netedCombsAlg_{1}}\left(D,K,\left[\;\right]\right)\\
		\mathit{netedCombs} & \left(D,K,\left[x_{n}\right]\right)=\mathit{netedCombsAlg_{2}}\left(D,K,\left[x_{n}\right]\right)\\
		\mathit{netedCombs} & \left(D,K,xs\cup ys\right)=\\
		&\mathit{netedCombsAlg_{3}}\left(D,K\mathit{,netedCombs}\left(D,K,xs\right),\mathit{netedCombs}\left(D,K,ys\right)\right),
	\end{aligned}
\end{equation}
Informally, the function $\mathit{\mathit{nestedCombsAlg}}\left(D,K\right)$
first takes as input $\left(\left(\mathit{Css},\mathit{NCss}\right),\left(\mathit{Css},\mathit{NCss}\right)\right)$
which is returned by $f\times f$. The combination set is updated
using the composition $\mathit{setEmpty}\left(D\right)\circ\mathit{KcombsAlg}\left(K\right)\circ\mathit{Ffst}$
where the first elements of the tuple are updated, and the $D$-combinations
are cleared. At the same time, the function $\left\langle \mathit{Kcombs}\left(K\right)\circ!!\left(D\right)\circ\mathit{KcombsAlg}\left(D\right)\circ\mathit{Ffst},\mathit{KcombsAlg}\left(K\right)\circ\mathit{Fsnd}\right\rangle :\left(\left(\mathit{Css},\mathit{NCss}\right),\left(\mathit{Css},\mathit{NCss}\right)\right)\to\left(\mathit{NCss},\mathit{NCss}\right)$
updates the combinations and nested combinations in the tuple, respectively.
The newly generated $D$-combinations are then used to produce new
nested combinations. Finally, the two nested combinations in the tuple
are merged using $\mathit{\mathit{KcombsAlg}}\left(K\right):\left(\mathit{NCss},\mathit{NCss}\right)\to\mathit{NCss}$.

\subsection{Deep incremental cell enumeration (Deep-ICE) algorithm and symmetry
	fusion}

As noted, working with
the maxout network enables the application of an additional fusion
principle—an extension of the symmetric fusion theorem proposed by
\citet{he2023efficient} for linear classification.
\begin{thm}
Symmetric fusion for maxout neuron.\emph{ Given a maxout neuron
		defined by $K$ hyperplane. If the predictions associated with these $K$ hyperplanes are known, then the predictions
		for the configuration obtained by reversing the direction of all normal
		vectors can be obtained directly.}
\end{thm}
\begin{proof}
	See appendix \ref{subsec:Symmetric-fusion}.
\end{proof}
The symmetric fusion theorem eliminates half of the computation, allowing
us to enumerate all $2^{K}$ possible orientations of hyperplanes
using only $2^{K-1}$ of them. Consequently, the problem \ref{Problem specification}
can be reformulated more efficiently by applying the symmetric fusion

\[
	\protect\begin{aligned}\mathit{DeepICE}\left(D,K\right)= \mathit{min}_{\text{0-1}}\left(K\right)\circ\mathit{eval}^{\prime}\left(K-1\right)\circ\mathit{\mathit{nestedCombs}}\left(D,K\right),
		\protect\end{aligned}
\]
where $\mathit{eval}^{\prime}\left(K-1\right) = \mathit{eval}\left(K\right)\circ\mathit{cp}\left(\mathit{basgns}\left(K-1\right)\right) $.

We are now ready to derive the Deep-ICE algorithm, which follows as
a direct consequence of the following lemma.
\begin{lem}
	\emph{Let $\mathit{DeepICEAlg}$ be defined as
		\begin{equation}
			\mathit{DeepICEAlg}\left(D,K\right)=\mathit{min}_{\text{0-1}}\left(K\right)\circ\mathit{eval}^{\prime}\left(K-1\right)\circ\mathit{\mathit{nestedCombsAlg}}\left(D,K\right),\label{eq: deepice-definition}
		\end{equation}
		where $\mathit{eval}^{\prime}\left(K-1\right)$ evaluates $E_{\text{0-1}}$
		for each nested combination returned by $\mathit{\mathit{nestedCombsAlg}}\left(D,K\right)$.
		Then the following fusion condition holds:}
	
	\emph{
		\begin{equation}
			\mathit{DeepICE}\left(D,K\right)=f\circ\mathit{nestedCombsAlg}\left(D,K\right)=\mathit{DeepICEAlg}\left(D,K\right)\circ f\times f,
		\end{equation}
		where $f=\mathit{min}_{\text{0-1}}\left(D\right)\circ\mathit{eval}^{\prime}\left(K-1\right)$,
		which defines Algorithm (\ref{alg:Deep-ICE-algorithm}).}
\end{lem}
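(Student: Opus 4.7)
The plan is to invoke the join-list fusion law (Theorem \ref{thm: join-list fusion law}) with the outer function $f=\mathit{min}_{\text{0-1}}\circ\mathit{eval}^{\prime}$ and the join-list homomorphism $\mathit{nestedCombs}(D,K)$ whose algebras are the three cases $\mathit{nestedCombsAlg}_{1},\mathit{nestedCombsAlg}_{2},\mathit{nestedCombsAlg}_{3}$ already verified correct in Section 2.3. By that theorem, the composition $f\circ\mathit{nestedCombs}(D,K)$ collapses into the single join-list homomorphism with algebra $\mathit{DeepICEAlg}(D,K)$ precisely when the fusion condition $f\circ\mathit{nestedCombsAlg}=\mathit{DeepICEAlg}\circ(f\times f)$ holds pointwise on each of the three patterns. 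The proof therefore proceeds by a case split on the join-list constructor.

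For the empty and singleton patterns the verification is immediate. Since the standing assumption is $D\geq 2$, neither $\mathit{nestedCombsAlg}_{1}([\,])$ nor $\mathit{nestedCombsAlg}_{2}([a])$ produces any $K$-tuple of $D$-combinations; the enclosed $\mathit{NCss}$ is either $[[[\,]]]$ or trivially empty, so $\mathit{eval}^{\prime}$ has nothing to score and $\mathit{min}_{\text{0-1}}$ returns the identity element of the minimization (a dummy configuration with maximal loss). This coincides with the base cases $\mathit{DeepICEAlg}_{1}$ and $\mathit{DeepICEAlg}_{2}$.

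The bulk of the work is in the recursive third pattern. Unfolding $\mathit{nestedCombsAlg}_{3}((css_{1},ncss_{1}),(css_{2},ncss_{2}))$ yields, by the derivation of Section 2.3, a pair $(css,ncss)$ where $ncss$ decomposes as a disjoint join of (i) the old nested combinations already present on each side and (ii) the newly-formed nested combinations obtained by combining the freshly generated $D$-combinations $css!!D$ with the previously-seen $(K-1)$-combinations of hyperplanes. The key algebraic facts to exploit are that $\mathit{min}_{\text{0-1}}$ is itself a list homomorphism satisfying $\mathit{min}_{\text{0-1}}(x\cup y)=\mathit{min}_{\text{0-1}}(\mathit{min}_{\text{0-1}}(x),\mathit{min}_{\text{0-1}}(y))$, and $\mathit{eval}^{\prime}$ is a map operation, hence $\mathit{eval}^{\prime}(x\cup y)=\mathit{eval}^{\prime}(x)\cup\mathit{eval}^{\prime}(y)$. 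Distributing $\mathit{min}_{\text{0-1}}\circ\mathit{eval}^{\prime}$ over the join decomposition of $ncss$ reduces the left-hand side of the fusion condition to the minimum of three terms: the local optima already carried by $f(css_{1},ncss_{1})$, the local optimum carried by $f(css_{2},ncss_{2})$, and the optimum over the newly-generated cross-configurations; this is exactly what $\mathit{DeepICEAlg}_{3}$ is designed to compute from $(f\times f)$-applied inputs.

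The main obstacle is bookkeeping rather than deep algebra: we must ensure $f$ retains (or has access to) the intermediate $\mathit{Css}$ and $\mathit{NCss}$ needed by $\mathit{DeepICEAlg}_{3}$ to generate the new cross-configurations, since a pure projection to the minimum configuration would forget this information and break fusion. This is precisely why $\mathit{DeepICE}(D,K)$ is typed to return $(NC,\{1,-1\}^{K})\times\mathit{Css}\times\mathit{NCss}$ in the problem specification; the intermediate representations ride alongside the optimum so that subsequent recursive merges remain well-defined. With this structural alignment made explicit, the fusion condition becomes a diagram chase verifying that applying $\mathit{min}_{\text{0-1}}\circ\mathit{eval}^{\prime}$ before versus after $\mathit{nestedCombsAlg}_{3}$ produces identical results, which follows from the distributivity facts above together with the correctness of $\mathit{nestedCombsAlg}$ proved in Appendix \ref{subsec: Proof of nested generator}. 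The fused homomorphism is then Algorithm \ref{alg:Deep-ICE-algorithm}.
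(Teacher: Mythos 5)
Your overall route is the same as the paper's: check the fusion condition constructor-by-constructor on the join-list, dispose of the empty and singleton patterns trivially (since $D\geq 2$ nothing of size $K$ exists yet), and in the recursive pattern decompose the merged candidate set into the $K$-combinations already enumerated on the left, those on the right, and the newly created cross-configurations, while insisting that $f$ carry the intermediate $\mathit{Css}$ and $\mathit{NCss}$ alongside the optimum so that $\mathit{nestedCombsAlg}_{3}$ inside $\mathit{DeepICEAlg}$ can still generate the new configurations. That bookkeeping observation is exactly the point the paper's proof also leans on.

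The genuine gap is your final claim of strict equality. The paper's own proof deliberately does \emph{not} prove $f\circ\mathit{nestedCombsAlg}\left(D,K\right)=\mathit{DeepICEAlg}\left(D,K\right)\circ f\times f$; it relaxes the fusion condition to a set-membership (refinement) relation, $f\circ\mathit{nestedCombsAlg}\left(D,K\right)\subseteq\mathit{DeepICEAlg}\left(D,K\right)\circ f\times f$, on the grounds that with a selector such as $\mathit{min}_{\text{0-1}}$ equality ``rarely holds''. Your distributivity facts ($\mathit{min}_{\text{0-1}}\left(x\cup y\right)=\mathit{min}_{\text{0-1}}\left(\mathit{min}_{\text{0-1}}\left(x\right),\mathit{min}_{\text{0-1}}\left(y\right)\right)$ and $\mathit{eval}^{\prime}$ being a map) establish equality of the optimal \emph{loss values}, but not of the returned \emph{configurations}: when several configurations attain the same 0-1 loss, a single global selection over the fully merged list (left-hand side) and the staged selection followed by a comparison against only the newly generated cross-configurations (right-hand side) may return different, equally optimal representatives, so the two sides need not be equal as functions unless you also impose and verify a deterministic, order-consistent tie-breaking rule. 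The statement that suffices for global optimality, and the one the paper actually establishes, is the containment: every output of the left-hand side is among the optimal configurations the right-hand side may return. Weaken your concluding ``identical results'' step to this refinement relation (or add an explicit canonical tie-breaking argument), and your proof matches the paper's.
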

See Appendix \ref{subsec:Proof-of-fusion} for detailed proof. Algorithm (\ref{eq: deepice-definition}) has a worst-case complexity of $O\left(N^{DK+1}\right)$, which is formally established in the following theorem.
\begin{thm}
	\emph{The DeepICE algorithm has a time complexity of $O\left(K\times N\times2^{K-1}\times\left(\begin{array}{c}
			\left(\begin{array}{c}
				N\\
				D
			\end{array}\right)\\
			K
		\end{array}\right)+N\times D^{3}\times\left(\begin{array}{c}
			N\\
			D
		\end{array}\right)\right)$ which is strictly smaller than $O\left(N^{DK+1}\right)$, and a space
		complexity of $O\left(\left(\begin{array}{c}
			\left(\begin{array}{c}
				N\\
				D
			\end{array}\right)\\
			K-1
		\end{array}\right)\times K+\left(\begin{array}{c}
			N\\
			D-1
		\end{array}\right)\times N\right)$, which is strictly smaller than $O\left(N^{D\left(K-1\right)}\right)$.}
\end{thm}
See Appendix \ref{subsec:Complexity-analysis} for detailed proof.

In practice, we provide two implementations for (\ref{eq: deepice-definition}) (see \ref{subsec: algorithms}). The sequential version enables two techniques that substantially improve memory efficiency and runtime performance. The D\&C version, which builds upon the sequential definition, supports embarrassingly parallel execution.

Figure \ref{fig:run-time-polynomial}  show that the
empirical wall-clock runtime of our algorithm aligns with our worst-case
complexity analysis.
\paragraph{Generalization to deep neural networks}

Our algorithm generalizes naturally to deep neural networks. Deeper
networks can be viewed as compositions of hidden neurons from preceding
layers, where linear combinations of these neurons form the predictions
of the subsequent layer. Hence, each layer is essentially a function
of the predictions generated in the layer before it. Suppose the $i$-th
hidden layer contains $K_{i}$ hidden nodes. Computing all possible
predictions for this layer has complexity $O\left(N^{D\times K_{1}\times{K_{2}}\times{K_{3}}\ldots\times{K_{i}}}\right)$.
For instance, the optimal solution of a three-layer network is a nested-nested
combination, while a four-layer network corresponds to a nested-nested-nested
combination. Solving a three-layer network requires complexity $O\left(N^{D\times K_{1}\times K_{2}}\right)$.
Consequently, obtaining exact solutions for deeper networks is practically
infeasible due to combinatorial explosion.

One way to mitigate this challenge is to train a deep network greedily,
where the computation of the second hidden layer depends only on the
first. In this case, the complexity becomes $O\left(N^{D\times K_{1}}+K_{1}^{K_{2}}+K_{2}^{K_{3}}\ldots+K_{i-1}^{K_{i}}\right)$ for network with $i$ layers.
Under this scheme, regardless of depth, the overall complexity is
dominated by that of the first hidden layer.

\section{Empirical analysis}

We evaluate the performance of our Deep-ICE algorithm against two
baselines: support vector machines (SVMs) and an identical neural
network architecture trained using Adam algorithm, referred to as MLP. The MLP is optimized with binary
cross-entropy loss with logits, using the entire training dataset
as a single batch in each epoch. The evaluation is conducted across
11 datasets from the UCI Machine Learning Repository. Since we assume
data are in general position, which requires affine independence of
the data, we remove duplicate entries and add a zero mean Gaussian
noise (standard deviation $1\times10^{-8}$, small enough that
it does not affect the results of SVM and MLP) to each dataset. All experiments were conducted on a single GeForce RTX 4060 Ti GPU.

\begin{figure}[h]
	\begin{subfigure}[b]{0.45\textwidth}
		\centering
		\includegraphics[scale=0.18, trim=0 40 0 0, clip]{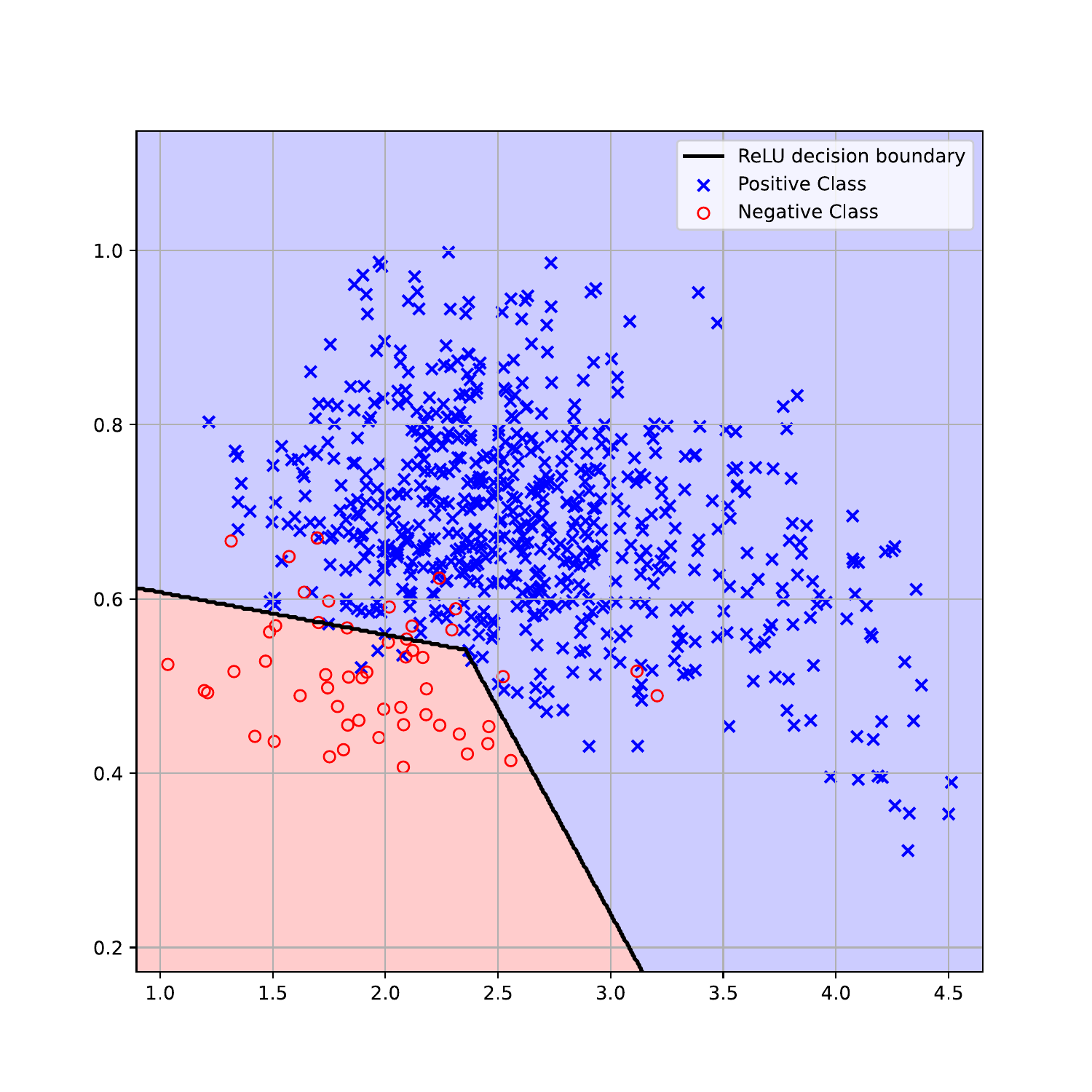}
		\caption{Deep-ICE (0-1 loss: 16)}
		\label{fig:voicemap-left}
	\end{subfigure}
	\hfill
	\begin{subfigure}[b]{0.45\textwidth}
		\centering
		\includegraphics[scale=0.18, trim=0 40 0 0, clip]{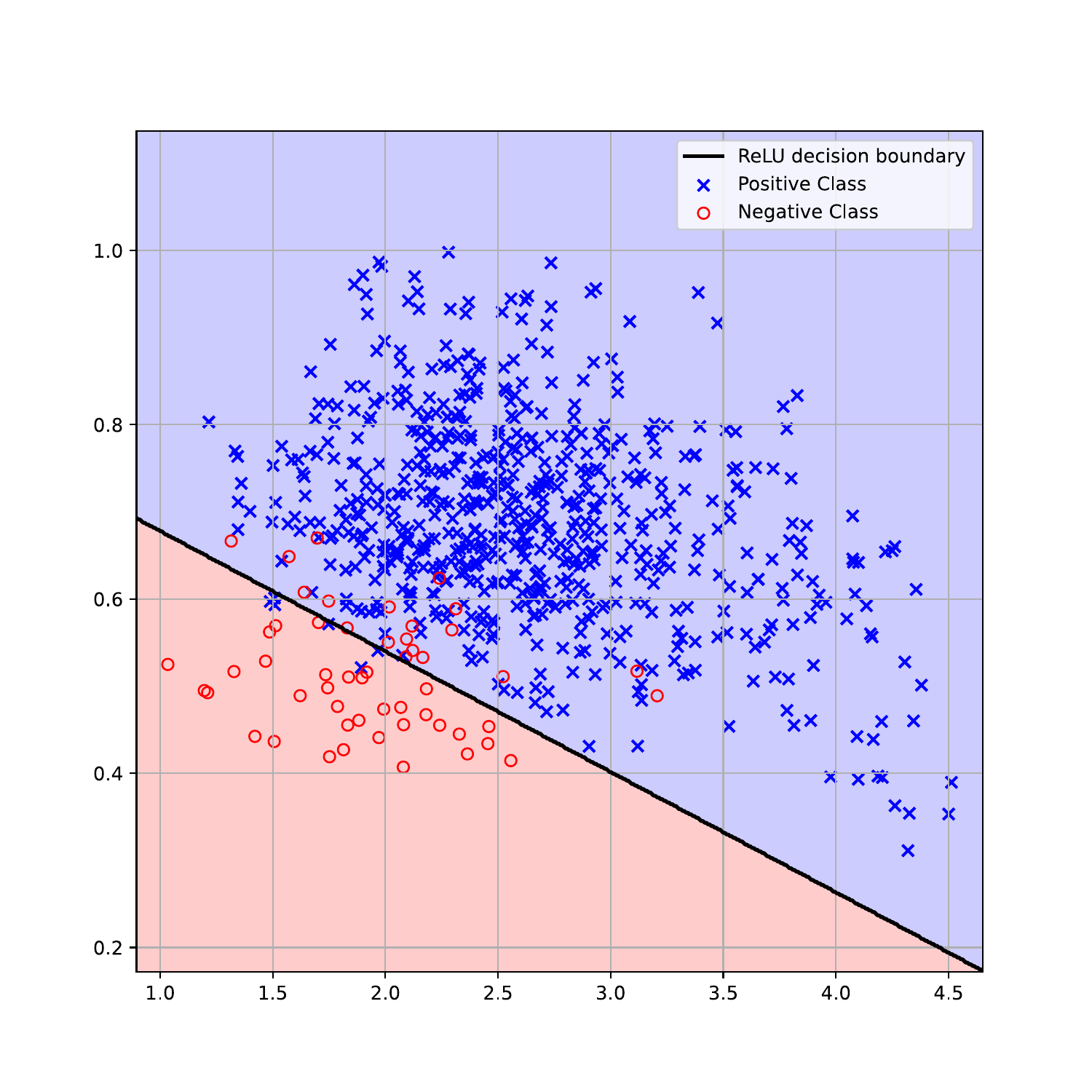}
		\caption{Gradient descent (0-1 loss: 25)}
		\label{fig:voicemap-right}
	\end{subfigure}
	\caption{The global optimal solution of a rank-2 maxout network with one neuron
		on a real-world dataset containing $N=704$ data items in $\mathbb{R}^{2}$.}
	\label{fig:voicemap}
\end{figure}

\paragraph{Exact solution vs. gradient descent}

Figure \ref{fig:voicemap} illustrates the ERM solution and the gradient
descent outcome for a rank-2 maxout network with one maxout neuron.
Previously, \citet{Xi_Exact_0-1_loss_2023} reported 0-1 losses of
19 and 23 for the global optimal linear model and the SVM, respectively,
on this dataset. In contrast, ERM solution obtained by $\mathit{DeepICE}$,
achieves only 16 misclassifications, compared to 25 for the same architecture
trained via gradient descent. Notably, despite a rank-2 maxout neuron
involves two hyperplanes, the gradient-based solution uses only one;
the second hyperplane lies outside the data region and does not contribute
to predictions.

\paragraph{Exact solution over coresets}

\begin{table}
	\scriptsize
	\caption{Five-fold cross-validation results on the UCI dataset. We compare the performance of our Deep-ICE algorithm—trained either with the coreset selection method or directly by Deep-ICE algorithm (marked by *)—against approximate methods: SVM and a maxout network trained via gradient descent (denoted as MLP). Results are reported as mean 0–1 loss over training and test sets in the format: Training Error / Test Error (Standard Deviation: Train / Test). The best-performing algorithm in each row is highlighted in bold. 
		 \label{tab:5-fold-cross-validation experiments}}
	
	\begin{center} 
		\begin{tabular}{@{} 
				>{\raggedright}p{0.03\textwidth}  
				>{\raggedleft}p{0.03\textwidth} 
				>{\raggedleft}p{0.02\textwidth} 
				>{\raggedleft}p{0.1\textwidth} 
				>{\raggedleft}p{0.1\textwidth} 
				>{\raggedleft}p{0.1\textwidth} 
				>{\raggedleft}p{0.09\textwidth} 
				>{\raggedleft}p{0.09\textwidth} 
				>{\raggedleft}p{0.08\textwidth} 
				>{\raggedleft}p{0.1\textwidth} 
				@{}}
			
			Dataset & $N$ & $D$ & Deep-ICE (\%) ($K=1$) &  Deep-ICE (\%) ($K=2$) &  Deep-ICE (\%) ($K=3$) &  SVM (\%) & MLP (\%) ($K=1$) &  MLP (\%) ($K=2$)  &  MLP (\%) ($K=3$)  			\tabularnewline
			\midrule  
\addlinespace[0.5cm]

		 Ai4i & 10000 & 6 & 
			
97.45/97.40

(0.10/0.36)
&
\textbf{97.90}/\textbf{97.82}

(0.01/0.35)
&
97.71/97.71

(0.10/0.25)
&
96.64/96.48

(0.11/0.44)
&
97.01/96.90

(0.11/0.40)
&
97.20/97.02

(0.18/0.39)
&
97.56/97.55

(0.13/0.46)
			\tabularnewline

			Caesr & 72 & 5 & 
*74.55/82.67

(7.18/16.11)
&
\textbf{89.45}/\textbf{88.00}

(4.21/9.80)
&
84.36/86.67

(7.51/5.96)
&
72.00/57.33

(7.14/6.80)
&
71.64/62.67

(6.76/6.80)
&
76.36/56.00

(6.19/9.04)
&
81.82/60.00

(1.15/11.93)
			\tabularnewline

			VP & 704 & 2 & 
*96.94/\textbf{97.59}

(0.44/1.46)
&
97.76/\textbf{97.59}

(0.41/1.65)
&
\textbf{97.80}/97.45

(0.43/1.71)
&
96.77/97.02

(0.44/2.07)
&
96.63/96.74

(0.50/2.13)
&
96.77/97.02

(0.73/1.64)
&
96.63/96.74

(0.50/2.13)
			\tabularnewline

			Spesis & 975 & 3 & 
*94.47/92.88

(0.10/0.61)
&
\textbf{96.43}/95.26

(0.49/1.82)
&
96.24/\textbf{95.36}

(0.22/1.62)
&
94.46/92.43

(0.10/0.38)
&
94.46/92.43

(0.10/0.38)
&
94.46/92.55

(0.10/0.51)
&
94.46/92.43

(0.10/0.38)
			\tabularnewline
			 
			HB & 283 & 3 & 
*77.18/75.44

(0.45/2.48)
&
80.11/77.19

(0.74/2.48)
&
\textbf{80.85}/\textbf{78.53}

(1.02/3.57)
&
72.40/71.23

(0.46/2.38)
&
72.82/74.80

(0.66/2.08)
&
75.34/75.26

(0.86/2.51)
&
75.97/73.92

(0.18/2.08)
			\tabularnewline
			 
			BT & 502 & 4 & 
		*77.13/76.36
		
		(1.46/2.71)
&
		\textbf{79.59}/\textbf{77.98}
		
		(0.62/3.38)
&
		79.36/\textbf{77.98}
		
		(0.59/2.88)
&
		75.09/70.14
		
		(0.51/0.76)
&
		76.17/73.54
		
		(1.05/3.64)
&
		76.11/73.54
		
		(1.01/2.06)
&
		76.29/75.45
		
		(1.02/2.29)
			\tabularnewline
			 
			AV & 2342 & 7 & 
89.89/88.52

(0.33/1.56)
&
\textbf{90.34}/\textbf{89.04}

(0.15/1.39)
&
89.77/88.76

(0.33/1.75)
&
87.16/87.26

(0.31/1.24)
&
86.92/87.20

(0.24/0.71)
&
87.18/86.88

(0.24/0.66)
&
87.63/87.31

(0.44/0.73)
			\tabularnewline
			 
			SO & 1941 & 27 & 
\textbf{77.77}/\textbf{76.03}

(0.43/0.83)
&
77.13/75.33

(0.81/1.32)
&
76.66/74.95

(0.74/1.38)
&
73.67/70.80

(0.52/2.05)
&
74.81/72.13

(0.44/1.63)
&
77.09/71.71

(0.26/1.66)
&
78.33/74.68

(0.40/2.31)
			\tabularnewline
			 
			DB & 1146 & 9 & 
78.78/79.69

(0.41/0.69)
&
83.60/\textbf{81.37}

(0.43/2.52)
&
\textbf{83.88}/81.32

(0.98/2.23)
&
69.72/67.62

(0.65/2.86)
&
76.13/74.77)

(0.41/2.01
&
77.64/76.19

(0.65/1.06)
&
77.85/75.11

(0.89/0.72)
			\tabularnewline
			 
			RC & 3810 & 7 & 
93.88/92.45

(0.28/1.02)
&
93.91/\textbf{93.10}

(0.24/1.02)
&
\textbf{93.94}/92.98

(0.21/0.98)
&
93.05/91.75

(0.25/1.12)
&
93.30/92.10

(0.28/1.07)
&
93.30/92.15

(0.30/1.15)
&
93.30/92.12

(0.29/1.13)
			\tabularnewline
			 
			SS & 51433 & 3 & 
86.57/\textbf{86.72}

(0.03/0.15)
&
\textbf{86.60}/\textbf{86.72}

(0.04/0.16)
&
86.59/86.70

(0.03/0.11)
&
82.77/82.75

(0.06/0.22)
&
79.73/79.73

(0.15/0.20)
&
79.65/79.65

(0.18/0.16)
&
79.48/79.73

(0.07/0.04)
			\tabularnewline

		\end{tabular}
		\par\end{center}
	
\end{table}

Exact solutions typically require an exhaustive exploration of the
configuration space. Achieving exact optimality on training data is
often unnecessary, as such solutions may not generalize well to out-of-sample
data.

Instead, generating multiple high-quality candidate solutions enables
selection based on validation or test performance. For example, SVMs
provide tunable hyperparameters to generate alternative models, while
gradient-based MLPs yield varied solutions via different random seed
initializations. However, both approaches require computationally
expensive retraining to explore alternatives, often without principled
guidance. Attempts to automate this process frequently rely on strong
probabilistic assumptions that rarely hold in practice \citep{shahriari2015taking,klein2017fast}
or employ empirical heuristics \citep{liao2022empirical,wainer2021tune,duan2003evaluation},
resulting in substantial computational waste due to redundant retraining.

A common approach to address this issue in studies of exact algorithms
is to use multiple random initializations. However, this approach
often becomes ineffective as data scales increase. Each run typically
uses a manually set time limit, but this still results in redundant
retraining. To address these challenges, we propose a coreset selection
method, detailed in Algorithm \ref{coreset-selection-method}. Instead
of computing the exact solution across the entire dataset, which is
computationally infeasible for large $K$ and $D$, our approach identifies
the exact solution for the most representative subsets. By shuffling
the data, the input will unlikely be the ordering that is pathological
i.e., one where the optimal solution is obtained only at a late stage
of the recursive process in the Deep-ICE algorithm. This method can
effectively explore thousands of candidate configurations in the coresets
that have lower training accuracy than SVMs and MLPs. In our experiments,
we trained a two-layer maxout network using the algorithmic process
described in \ref{coreset-selection-method}. In 5-fold cross-validation
tests, our method demonstrated significantly better performance. These results consistently
outperformed those of SVMs and the same maxout network trained with
gradient descent.

Due to the ability to generate an extensive number of candidate solutions,
we observed several interesting findings in our experiments. Although
extensive prior research suggests that the maximal-margin (MM) classifier
(i.e., SVM) offers theoretical guarantees for test accuracy \citep{mohri2012foundations},
we found that the MM classifier does not always perform as expected.
Specifically, we did not find clear evidence that the MM classifier
consistently achieves better out-of-sample performance. A more detailed
analysis is provided in Appendix \ref{subsec:Experiments-of-exhuastively}.

Furthermore, \citet{karpukhin2024exact} proposed an interesting framework
that introduces stochasticity into the model's output and optimizes
the expected accuracy, allowing gradient-based methods to directly
optimize accuracy rather than surrogate losses. However, despite being
named EXACT, the method is actually short for ``EXpected ACcuracy
opTimization'' and is therefore a stochastic approach rather than
a deterministic exact algorithm. We include a comparison with their
framework in Appendix \ref{subsec:Additional-experiments}, which
shows that it outperforms MLPs trained with surrogate losses.

Additionally, the wall-clock runtime comparison between EXACT and
MLP is provided in \ref{subsec:Additional-experiments}.

\section{Discussion and conclusion}

In this paper, we present the first algorithm for finding the globally minimal empirical risk of two-layer neural networks under 0–1 loss. The algorithm achieves polynomial time and space complexity for fixed $D$ and $K$. The DeepICE algorithm is specifically designed to optimize both efficiency and parallelizability. Even without bounding techniques to accelerate computation, our implementation demonstrates strong performance: it can handle over $1\times10^{11}$ configurations within minutes, highlighting the intrinsic efficiency of our algorithm independent of any bounding methods. Incorporating additional bounding techniques in future research could further enhance its scalability.

Another key contribution of this paper is the empirical evidence that optimal solutions do not necessarily overfit the data. Our out-of-sample tests indicate that solutions trained using our method, which achieve significantly higher training accuracy than SVMs or two-layer neural networks, still perform well on unseen data when model complexity is properly controlled. This finding points to a promising avenue for applying our algorithm to problems where both interpretability and model complexity are critical.

\section*{Reproducibility Statement}
	To facilitate reproducibility, we provide \textbf{three} versions of our algorithm: a \textit{recursive version}, a \textit{divide-and-conquer version}, and a \textit{sequential definition} in Appendix \ref{subsec: algorithms}. The recursive version is written clearly in a functional style and can be executed in a functional programming language with minimal syntactic adjustments, allowing the algorithm to run with no ambiguity. In addition, imperative implementations in both Python and CUDA are included in supplementary materials, along with all datasets used in our experiments. Enabling independent verification and replication of the results reported in this paper.

\bibliography{deepice_iclr26}

@book{bird2020algorithm,
  title={Algorithm Design with Haskell},
  author={Bird, Richard and Gibbons, Jeremy},
  year={2020},
  publisher={Cambridge University Press}
}

@article{arora2016understanding,
  title={Understanding deep neural networks with rectified linear units},
  author={Arora, Raman and Basu, Amitabh and Mianjy, Poorya and Mukherjee, Anirbit},
  journal={ArXiv preprint ArXiv:1611.01491},
  year={2016}
}

@book{hertrich2022facets,
  title={Facets of neural network complexity},
  author={Hertrich, Christoph},
  year={2022},
  publisher={Technische Universitaet Berlin (Germany)}
}

@article{goel2020tight,
  title={Tight hardness results for training depth-2 ReLU networks},
  author={Goel, Surbhi and Klivans, Adam and Manurangsi, Pasin and Reichman, Daniel},
  journal={ArXiv preprint ArXiv:2011.13550},
  year={2020}
}

@article{bartlett2019nearly,
  title={Nearly-tight VC-dimension and pseudodimension bounds for piecewise linear neural networks},
  author={Bartlett, Peter L and Harvey, Nick and Liaw, Christopher and Mehrabian, Abbas},
  journal={Journal of Machine Learning Research},
  volume={20},
  number={63},
  pages={1--17},
  year={2019}
}

@article{bird1996algebra,
  title={The algebra of programming},
  author={Bird, Richard and De Moor, Oege},
  journal={NATO ASI DPD},
  volume={152},
  pages={167--203},
  year={1996}
}

@book{mohri2012foundations,
  title={Foundations of machine learning},
  author={Mohri, Mehryar and Rostamizadeh, Afshin and Talwalkar, Ameet},
  year={2012},
  publisher={MIT press}
}

@article{he2023efficient,
  title={An efficient, provably exact algorithm for the 0-1 loss linear classification problem},
  author={He, Xi and Little, Max A},
  journal={ArXiv preprint ArXiv:2306.12344},
  year={2023}
}

@article{he2024ekm,
  title={{EKM}: an exact, polynomial-time algorithm for the $ K $-medoids problem},
  author={He, Xi and Little, Max A},
  journal={ArXiv preprint ArXiv:2405.12237},
  year={2024}
}

@article{froese2022computational,
  title={The computational complexity of ReLU network training parameterized by data dimensionality},
  author={Froese, Vincent and Hertrich, Christoph and Niedermeier, Rolf},
  journal={Journal of Artificial Intelligence Research},
  volume={74},
  pages={1775--1790},
  year={2022}
}

@software{Xi_Exact_0-1_loss_2023,
author = {Xi, He and Little, Max A.},
doi = {10.5281/zenodo.7814259},
month = apr,
title = {{Exact 0-1 loss linear classification algorithms}},
url = {https://github.com/XiHegrt/E01Loss},
version = {1.0.1},
year = {2023}
}

@article{kirchner2016machine,
  title={Machine Bias: There’s software used across the country to predict future criminals. And it’s biased against blacks},
  author={Kirchner, Julia and Angwin, Surya and Mattu, Jeff and Larson, Lauren},
  journal={Pro Publica: New York, NY, USA},
  year={2016}
}

@article{mcgough2018bad,
  title={How bad is Sacramento’s air, exactly? Google results appear at odds with reality, some say},
  author={McGough, Michael},
  journal={Sacramento Bee},
  volume={7},
  year={2018}
}

@inproceedings{klein2017fast,
  title={Fast bayesian optimization of machine learning hyperparameters on large datasets},
  author={Klein, Aaron and Falkner, Stefan and Bartels, Simon and Hennig, Philipp and Hutter, Frank},
  booktitle={Artificial intelligence and statistics},
  pages={528--536},
  year={2017},
  organization={PMLR}
}

@article{shahriari2015taking,
  title={Taking the human out of the loop: A review of Bayesian optimization},
  author={Shahriari, Bobak and Swersky, Kevin and Wang, Ziyu and Adams, Ryan P and De Freitas, Nando},
  journal={Proceedings of the IEEE},
  volume={104},
  number={1},
  pages={148--175},
  year={2015},
  publisher={IEEE}
}

@article{duan2003evaluation,
  title={Evaluation of simple performance measures for tuning SVM hyperparameters},
  author={Duan, Kaibo and Keerthi, S Sathiya and Poo, Aun Neow},
  journal={Neurocomputing},
  volume={51},
  pages={41--59},
  year={2003},
  publisher={Elsevier}
}

@article{wainer2021tune,
  title={How to tune the RBF SVM hyperparameters? An empirical evaluation of 18 search algorithms},
  author={Wainer, Jacques and Fonseca, Pablo},
  journal={Artificial Intelligence Review},
  volume={54},
  number={6},
  pages={4771--4797},
  year={2021},
  publisher={Springer}
}

@article{liao2022empirical,
  title={An empirical study of the impact of hyperparameter tuning and model optimization on the performance properties of deep neural networks},
  author={Liao, Lizhi and Li, Heng and Shang, Weiyi and Ma, Lei},
  journal={ACM Transactions on Software Engineering and Methodology (TOSEM)},
  volume={31},
  number={3},
  pages={1--40},
  year={2022},
  publisher={ACM New York, NY}
}

@inproceedings{kolmogorov1957representations,
  title={On the representations of continuous functions of many variables by superposition of continuous functions of one variable and addition},
  author={Kolmogorov, Andrei Nikolaevich},
  booktitle={Dokl. Akad. Nauk USSR},
  volume={114},
  pages={953--956},
  year={1957}
}

@article{rudin2019stop,
  title={Stop explaining black box machine learning models for high stakes decisions and use interpretable models instead},
  author={Rudin, Cynthia},
  journal={Nature machine intelligence},
  volume={1},
  number={5},
  pages={206--215},
  year={2019},
  publisher={Nature Publishing Group UK London}
}

@article{karpukhin2024exact,
  title={EXACT: How to train your accuracy},
  author={Karpukhin, Ivan and Dereka, Stanislav and Kolesnikov, Sergey},
  journal={Pattern Recognition Letters},
  volume={185},
  pages={23--30},
  year={2024},
  publisher={Elsevier}
}

@article{bai2023efficient,
  title={Efficient global optimization of two-layer relu networks: Quadratic-time algorithms and adversarial training},
  author={Bai, Yatong and Gautam, Tanmay and Sojoudi, Somayeh},
  journal={SIAM Journal on Mathematics of Data Science},
  volume={5},
  number={2},
  pages={446--474},
  year={2023},
  publisher={SIAM}
}
\bibliographystyle{iclr2026_conference}

\appendix

\section{Proofs}

\subsection{Symmetric fusion for maxout network\label{subsec:Symmetric-fusion}}
\begin{thm}
	Symmetric fusion for maxout network.\emph{ Given a maxout network
		defined by $K$ hyperplane (neurons).If the predictions associated
		with this configuration of $K$ hyperplanes are known, then the predictions
		for the configuration obtained by reversing the direction of all normal
		vectors can be obtained directly from the original hyperplanes, without
		explicitly recomputing the predictions for the reversed hyperplanes.}
\end{thm}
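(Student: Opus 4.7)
The plan is to exploit the elementary identity $\max_k(-a_k) = -\min_k a_k$ so that the reversed-configuration prediction becomes a trivial by-product of quantities already computed for the original configuration. This converts the symmetry claim into a statement about sharing intermediate affine scores.

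First, I would fix a data point $\boldsymbol{x}$ and set $d_k(\boldsymbol{x}) = \boldsymbol{w}_k \bar{\boldsymbol{x}}$ for $k\in\mathcal{K}$. By equation (\ref{eq:maxout decision function}) the maxout decision function is $f_{\boldsymbol{W}_1}(\boldsymbol{x}) = \max_{k\in\mathcal{K}} d_k(\boldsymbol{x})$, so the original prediction is $p(\boldsymbol{x}) = \operatorname{sign}\!\bigl(\max_{k} d_k(\boldsymbol{x})\bigr)$. Evaluating $p(\boldsymbol{x})$ therefore already materialises all $K$ affine scores $\{d_k(\boldsymbol{x})\}_{k\in\mathcal{K}}$.

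Second, consider the reversed configuration obtained by substituting $\boldsymbol{w}_k \mapsto -\boldsymbol{w}_k$ for every $k$. Then $d_k(\boldsymbol{x})$ becomes $-d_k(\boldsymbol{x})$ and, using $\max_k(-a_k) = -\min_k a_k$, the reversed prediction is
\[
p^{-}(\boldsymbol{x}) \;=\; \operatorname{sign}\!\Bigl(\max_{k\in\mathcal{K}}\bigl(-d_k(\boldsymbol{x})\bigr)\Bigr) \;=\; -\operatorname{sign}\!\Bigl(\min_{k\in\mathcal{K}} d_k(\boldsymbol{x})\Bigr).
\]
Because the right-hand side depends only on the same $K$ scores $d_k(\boldsymbol{x})$ used to compute $p(\boldsymbol{x})$, the reversed prediction is obtained by a single extra $\min$ and a sign flip, with no affine evaluations against the reversed hyperplanes. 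I would then lift this pointwise statement to the dataset by applying it to every $\boldsymbol{x}_n$ and observing that this is exactly what lets the enumeration over $2^K$ sign patterns in $\mathit{basgns}(K)$ be folded into $2^{K-1}$ patterns in the derivation preceding the lemma.

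The algebra itself is immediate, so the only real obstacle is ensuring that $\operatorname{sign}$ is well-defined and that the argument is consistent with the paper's conventions. Since the data are assumed to be in general position and each candidate hyperplane is the affine hull of $D$ of them, the scores $d_k(\boldsymbol{x}_n)$ are generically non-zero for test points outside those defining combinations; where this could fail (a point $\boldsymbol{x}_n$ lying exactly on some enumerated hyperplane), one needs a tie-breaking convention for $\operatorname{sign}(0)$ consistent with the one used in the definition of $E_{\text{0-1}}$ in (\ref{eq: ReLU ERM}), and to check that reversing all normals leaves this convention invariant. This bookkeeping is the one place care is required; once fixed, the chain of equalities above yields the claim.
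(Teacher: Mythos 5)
Your argument is correct and reaches the same conclusion as the paper, but by a different route. You work pointwise and algebraically: writing $d_k(\boldsymbol{x})=\boldsymbol{w}_k\bar{\boldsymbol{x}}$ and invoking $\max_k(-d_k)=-\min_k d_k$, so the reversed-configuration prediction is a sign flip of the minimum of scores already in hand. The paper instead argues set-theoretically: it observes that the maxout network predicts positive exactly when at least one hyperplane does, so the positive prediction set is $\bigcup_k \boldsymbol{y}_k^{+}$, and then performs a three-way case analysis (all hyperplanes positive, all negative, mixed), concluding that reversing every normal flips only the first two categories while mixed points keep their label. The two arguments encode the same structural fact — the reversed prediction is determined by the per-hyperplane signs that are already stored — and your version is more compact and makes the lift to the $2^{K}\to 2^{K-1}$ reduction transparent. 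What the paper's case analysis buys is a statement directly in terms of the per-hyperplane prediction label sets, which is the form actually consumed by the implementation (a stored $\binom{N}{D}\times N$ sign matrix), and it quietly absorbs the boundary convention: the paper takes ``positive'' to mean $\exists k:\ \boldsymbol{w}_k\bar{\boldsymbol{x}}\ge 0$, under which points lying on an enumerated hyperplane (e.g.\ the $D$ points spanning it) stay positive in both orientations, whereas your identity $\operatorname{sign}(-x)=-\operatorname{sign}(x)$ fails at $x=0$ under that convention, so the formula $p^{-}(\boldsymbol{x})=-\operatorname{sign}\bigl(\min_k d_k(\boldsymbol{x})\bigr)$ needs exactly the tie-breaking adjustment you flag (reversed prediction positive iff $\min_k d_k\le 0$). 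Since you identified and fenced off that issue, I regard your proposal as a sound, arguably cleaner, alternative derivation rather than one with a gap.
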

\begin{proof}
	Consider a maxout network defined by $K$ hyperplanes $\mathcal{H}=\left\{ h_{k}\mid k\in\mathcal{K}=\left\{ 1,2,\ldots,K\right\} \right\} $,
	where each hyperplane $h_{k}$ is defined by a normal vector $\boldsymbol{w}_{k}:\mathbb{R}^{D}$.
	Together these hyperplanes define a decision function $f_{\boldsymbol{W}_{1},\boldsymbol{W}_{2}}\left(\boldsymbol{x}\right)$.
	Equation (\ref{Problem specification}) implies that a data item $\boldsymbol{x}$
	is predicted to negative class by $f_{\boldsymbol{W}_{1},\boldsymbol{W}_{2}}\left(\boldsymbol{x}\right)$
	if and only it lies in the negative sides of all hyperplanes in $\mathcal{H}$,
	because $f_{\boldsymbol{W}_{1},\boldsymbol{W}_{2}}\left(\boldsymbol{x}\right)$
	will return positive as long as there exists a $k$ such that $\boldsymbol{w}_{k}\boldsymbol{x}\geq0$.
	Therefore, the prediction labels of the two-layer NN $\boldsymbol{y}_{\text{maxout}}$
	consists of the union of positive prediction labels for each hyperplane
	$h_{k}$, and the remaining data item, which lies in the negative
	side with respect to all $K$ hyperplanes will be assigned to negative
	class. class. In other words, if we denote $\boldsymbol{y}^{+}$ and
	$\boldsymbol{y}^{-}$ as the positive and negative prediction indexes
	of $\boldsymbol{y}$ respectively, then we have
	\begin{equation}
		\begin{aligned}\boldsymbol{y}_{\text{maxout}}^{+} & =\bigcup_{k\in\mathcal{K}}\boldsymbol{y}_{k}^{+}\\
			\boldsymbol{y}_{\text{maxout}}^{-} & =\mathcal{D}\backslash\boldsymbol{y}_{\text{maxout}}^{+}
		\end{aligned}
	\end{equation}
	where $\backslash$ is defined as the set difference and $\bigcup_{k\in\mathcal{K}}\boldsymbol{y}_{k}^{+}$
	denote the union of $\boldsymbol{y}_{k}^{+}$, $k\in\mathcal{K}$.
	For instance, if $\boldsymbol{y}_{1}=\left(1,1,-1,-1\right)$ and
	$\boldsymbol{y}_{2}=\left(-1,1,1,-1\right)$, then $\boldsymbol{y}_{1}^{+}=\left\{ 1,2\right\} $
	and $\boldsymbol{y}_{2}^{+}=\left\{ 2,3\right\} $, thus $\boldsymbol{y}_{1}^{+}\cup\boldsymbol{y}_{2}^{+}=\left\{ 1,2,3\right\} $
	
	For a two-layer maxout NN, the data points can be classified into
	three categories based on their relationship to the $K$ hyperplanes
	defined by the $K$ hidden neurons:
	
	1. Data points that lie in the region where all $K$ hyperplanes are
	on the positive side.
	
	2. Data points that lie in the region where all $K$ hyperplanes are
	on the negative side.
	
	3. Data points that lie in the region where some hyperplanes are on
	the positive side and others are on the negative side.
	
	If we reverse the orientation of all $K$ hyperplanes in $\mathcal{H}$,
	i.e., $\boldsymbol{w}_{k}=-\boldsymbol{w}_{k}$. Only data points
	that fall into the class of the first two cases will be reversed,
	because the prediction labels of these data be reversed if the orientation
	for all hyperplanes is reversed, the classification of data points
	in the third category will remain unchanged. This is because (\ref{eq: relu decision funciton})
	implies that, the prediction labels of the two-layer NN, $\boldsymbol{y}_{\text{maxout}}$,
	consist of the union of positive prediction labels for each hyperplane
	$h_{k}$.Therefore, reversing the direction of all hyperplanes will
	affect only data points $\boldsymbol{x}_{n}$ that lie in the positive
	class for all hyperplanes, ($n\in\boldsymbol{y}_{k}^{+}$, $\forall k\in\mathcal{K}$)
	or the negative class for all hyperplanes ($n\in\boldsymbol{y}_{k}^{-}$,
	$\forall k\in\mathcal{K}$ ) will be change the label. For any other
	data points, there always exists at least one hyperplane that classifies
	them as negative. After reversing the direction of all hyperplanes,
	this same hyperplane will classify these points as positive, leaving
	their prediction labels unchanged.
\end{proof}

\subsection{Proof of nested combination generator\label{subsec: Proof of nested generator}}

Given $\mathit{nestedCombsAlg}\left(D,K\right)$ defined as

\begin{align}
	\Big\langle & \mathit{setEmpty}(D) \circ \mathit{KcombsAlg}(K) \circ \mathit{Ffst},\quad \nonumber \\
	& \mathit{KcombsAlg}(K) \circ \Big\langle 
	\mathit{Kcombs}(K) \circ !!\left(D\right) \circ \mathit{KcombsAlg}(D) \circ \mathit{Ffst},\,
	\mathit{KcombsAlg}(K) \circ \mathit{Fsnd}
	\Big\rangle 
	\Big\rangle, \label{nested combs gen-abstract}
\end{align}

We need to verify the following fusion condition

\begin{equation}
	f\circ\mathit{KcombsAlg}\left(D\right)=\mathit{nestedCombsAlg}\left(D,K\right)\circ f\times f,
\end{equation}
where $f=\left\langle \mathit{setEmpty}\left(D\right),\mathit{Kcombs}\left(K\right)\circ!!\left(D\right)\right\rangle $.
In other words, we need to prove that the following diagram commutes

\[
\xymatrix{\mathit{Css}\ar[d]_{f} &  &  &  & \left(\mathit{Css},\mathit{Css}\right)\ar[d]^{f\times f}\ar[llll]_{\mathit{\mathit{kcombsAlg}\left(D\right)}}\\
	\left(\mathit{Css},\mathit{NCss}\right) &  &  &  & \left(\left(\mathit{Css},\mathit{NCss}\right),\left(\mathit{Css},\mathit{NCss}\right)\right)\ar[llll]^{\mathit{nestedCombsAlg}\left(D,K\right)}
}
\]

However, proving that the above diagram commutes is challenging. Instead,
we expand the diagram by presenting all intermediate stage explicitly

\subsubsection*{
	\[
	\protect\xymatrix{\mathit{Css}\ar[d]^{\left\langle \mathit{SE}\left(D\right),!!\left(D\right)\right\rangle } &  &  &  &  &  &  & \left(\mathit{Css},\mathit{Css}\right)\ar[d]_{\left\langle \mathit{SE}\left(D\right),!!\left(D\right)\right\rangle \times\left\langle \mathit{SE}\left(D\right),!!\left(D\right)\right\rangle }\ar[lllllll]^{\mathit{KCsA}\left(D\right)}\\
		\left(\mathit{Css},\mathit{Cs}\right)\ar[d]^{\mathit{id}\times\mathit{KCs}\left(K\right)} &  & \left(\mathit{Css},\left(\mathit{Cs},\mathit{Cs}\right)\right)\ar[ll]^{\mathit{SE}\left(D\right)\times\cup} &  &  &  &  & \left(\left(\mathit{Css},\mathit{Cs}\right),\left(\mathit{Css},\mathit{Cs}\right)\right)\ar[d]_{\left(\mathit{id}\times\mathit{KCs}\left(K\right)\right)\times\left(\mathit{id}\times\mathit{KCs}\left(K\right)\right)}\ar[lllll]^{\texttt{\texttt{\ensuremath{\left\langle \mathit{KCsA}\left(D\right)\circ\mathit{Ffst},\left\langle !!\left(D\right)\circ\mathit{KCsA}\left(D\right)\circ\mathit{Ffst},\cup\circ\mathit{Fsnd}\right\rangle \right\rangle }}}}\\
		\left(\mathit{Css},\mathit{NCss}\right) &  & \left(\mathit{Css},\left(\mathit{NCss},\mathit{NCss}\right)\right)\ar[ll]^{\mathit{SE}\left(D\right)\times\mathit{KcsA}\left(K\right)\circ\cup} &  &  &  &  & \left(\left(\mathit{Css},\mathit{NCss}\right),\left(\mathit{Css},\mathit{NCss}\right)\right)\ar[lllll]^{\texttt{\ensuremath{\left\langle \mathit{KCsA}\left(D\right)\circ\mathit{Ffst},\left\langle \mathit{KCs}\left(K\right)\circ!!\left(D\right)\circ\mathit{KCsA}\left(D\right)\circ\mathit{Ffst},\mathit{KCsA}\left(K\right)\circ\mathit{Fsnd}\right\rangle \right\rangle }}}
	}
	\]
}

where $\cup\left(a,b\right)=a\cup b$, and $\mathit{\mathit{SE}}$,
$\mathit{KCs}$ and $\mathit{KCsA}$ are short for $\mathit{setEmpty}$,
$\mathit{Kcombs}$ and $\mathit{KcombsAlg}$.

To prove the fusion condition, we first need to verify the two paths
between $\left(\mathit{Css},\mathit{Css}\right)$ and $\left(\mathit{Css},\mathit{Cs}\right)$.
In other words, we need to prove
\begin{equation}
	\begin{aligned} & \left\langle \mathit{SE}\left(D\right),!!\left(D\right)\right\rangle \circ\mathit{KCsA}\left(D\right)=\\
		& \qquad SE\left(D\right)\times\left(\cup\circ\texttt{\texttt{\ensuremath{\left\langle \mathit{KCsA}\left(D\right)\circ\mathit{Ffst},\left\langle !!\left(D\right)\circ\mathit{KCsA}\left(D\right)\circ\mathit{Ffst},\cup\circ\mathit{Fsnd}\right\rangle \right\rangle }}}\right)\circ\left(\left\langle \mathit{SE}\left(D\right),!!\left(D\right)\right\rangle \times\left\langle \mathit{SE}\left(D\right),!!\left(D\right)\right\rangle \right)
	\end{aligned}
\end{equation}

This can be proved by following equational reasoning

\begin{align*}
	& \mathit{SE}\left(D\right)\times\cup\circ\texttt{\texttt{\ensuremath{\left\langle \mathit{KCsA}\left(D\right)\circ\mathit{Ffst},\left\langle !!\left(D\right)\circ\mathit{KCsA}\left(D\right)\circ\mathit{Ffst},\cup\circ\mathit{Fsnd}\right\rangle \right\rangle }}}\circ\left(\left\langle SE\left(D\right),!!\left(D\right)\right\rangle \times\left\langle \mathit{SE}\left(D\right),!!\left(D\right)\right\rangle \right)\\
	\equiv & \text{ \ensuremath{\times} absorption law}\\
	& \texttt{\texttt{\ensuremath{\left\langle \mathit{SE}\left(D\right)\circ\mathit{KCsA}\left(D\right)\circ\mathit{Ffst},\cup\circ\left\langle !!\left(D\right)\circ\mathit{KCsA}\left(D\right)\circ\mathit{Ffst},\cup\circ\mathit{Fsnd}\right\rangle \right\rangle }}}\circ\left(\left\langle SE\left(D\right),!!\left(D\right)\right\rangle \times\left\langle \mathit{SE}\left(D\right),!!\left(D\right)\right\rangle \right)\\
	\equiv & \text{ Product fusion}\\
	& \big<\mathit{SE}\left(D\right)\circ\mathit{KCsA}\left(D\right)\circ\mathit{Ffst}\circ\left(\left\langle SE\left(D\right),!!\left(D\right)\right\rangle \times\left\langle \mathit{SE}\left(D\right),!!\left(D\right)\right\rangle \right),\\
	& \qquad\quad\texttt{\texttt{\ensuremath{\cup\circ\left\langle !!\left(D\right)\circ\mathit{KCsA}\left(D\right)\circ\mathit{Ffst},\cup\circ\mathit{Fsnd}\right\rangle \circ\left(\left\langle SE\left(D\right),!!\left(D\right)\right\rangle \times\left\langle \mathit{SE}\left(D\right),!!\left(D\right)\right\rangle \right)\big>}}}\\
	\equiv & \text{ Definition of \ensuremath{\mathit{SE\left(D\right)}} and product fusion}\\
	& \texttt{\texttt{\ensuremath{\left\langle \mathit{SE}\left(D\right)\circ\mathit{KCsA}\left(D\right),\cup\circ\left\langle !!\left(D\right)\circ\mathit{KCsA}\left(D\right)\circ\mathit{Fse}\left(D\right),\cup\circ\mathit{F!!}\left(D\right)\right\rangle \right\rangle }}}\\
	\equiv & \text{ Definition of Combination}\\
	& \left\langle \mathit{SE}\left(D\right),!!\left(D\right)\right\rangle \circ\mathit{KCsA}\left(D\right)
\end{align*}
where $\mathit{Fse}\left(D,a,b\right)=\left(\mathit{SE}\left(D,a\right),\mathit{SE}\left(D,b\right)\right)$,
$\mathit{F!!}\left(D,a,b\right)=\left(!!\left(D,a\right),!!\left(D,b\right)\right)$.

Note that, the equality between the third equation and the last equation
is a assertion of fact, rather than a results can be proved (verified).
This equivalence comes from the fact that size $K$-combinations can
be constructed by joining all possible combinations of size $i$ and
size $K-i$ combinations, where $0\leq i\leq K$.

Next, we prove the two paths between $\left(\left(\mathit{Css},\mathit{Cs}\right),\left(\mathit{Css},\mathit{Cs}\right)\right)$
and $\left(\mathit{Css},\mathit{NCss}\right)$ are equivalent.

\begin{align*}
	& \texttt{\ensuremath{\left\langle \mathit{SE}\left(D\right)\circ\mathit{KCsA}\left(D\right)\circ\mathit{Ffst},\mathit{KcsA}\left(K\right)\circ\cup\circ\left\langle \mathit{KCs}\left(K\right)\circ!!\left(D\right)\circ\mathit{KCsA}\left(D\right)\circ\mathit{Ffst},\mathit{KCsA}\left(K\right)\circ\mathit{Fsnd}\right\rangle \right\rangle }}\circ\\
	& \qquad\qquad\left(\mathit{id}\times\mathit{KCs}\left(K\right)\right)\times\left(\mathit{id}\times\mathit{KCs}\left(K\right)\right)\\
	\equiv & \text{ Product fusion,\ensuremath{f\times g=\left\langle f\circ\mathit{Ffst},f\circ\mathit{Fsnd}\right\rangle }},\mathit{FKCssnd}\left(D,\left(a,b\right),\left(c,d\right)\right)=\left(\mathit{KCs}\left(D,b\right),\mathit{KCs}\left(D,d\right)\right)\\
	& \texttt{\ensuremath{\left\langle \mathit{SE}\left(D\right)\circ\mathit{KCsA}\left(D\right)\circ\mathit{Ffst},\mathit{KcsA}\left(K\right)\circ\cup\circ\left\langle \mathit{KCs}\left(K\right)\circ!!\left(D\right)\circ\mathit{KCsA}\left(D\right)\circ\mathit{Ffst},\mathit{KCsA}\left(K\right)\circ\mathit{FKCs}\left(D\right)\right\rangle \right\rangle }}\\
	\equiv & \text{ Definition of \ensuremath{\mathit{Kcombs}}}\\
	& \texttt{\ensuremath{\left\langle \mathit{SE}\left(D\right)\circ\mathit{KCsA}\left(D\right)\circ\mathit{Ffst},\mathit{KcsA}\left(K\right)\circ\cup\circ\left\langle \mathit{KCs}\left(K\right)\circ!!\left(D\right)\circ\mathit{KCsA}\left(D\right)\circ\mathit{Ffst},\mathit{KCs}\left(K\right)\circ\cup\circ\mathit{Fsnd}\right\rangle \right\rangle }}\\
	\equiv & \text{ Definition of product}\\
	& \texttt{\ensuremath{\left\langle \mathit{SE}\left(D\right)\circ\mathit{KCsA}\left(D\right)\circ\mathit{Ffst},\mathit{KcsA}\left(K\right)\circ\cup\circ\mathit{KCs}\left(K\right)\circ\left\langle !!\left(D\right)\circ\mathit{KCsA}\left(D\right)\circ\mathit{Ffst},\cup\circ\mathit{Fsnd}\right\rangle \right\rangle }}\\
	\equiv & \text{ Definition of \ensuremath{KCs}}\\
	& \texttt{\ensuremath{\left\langle \mathit{SE}\left(D\right)\circ\mathit{KCsA}\left(D\right)\circ\mathit{Ffst},\mathit{KCs}\left(K\right)\circ\cup\circ\left\langle !!\left(D\right)\circ\mathit{KCsA}\left(D\right)\circ\mathit{Ffst},\cup\circ\mathit{Fsnd}\right\rangle \right\rangle }}
\end{align*}

\subsection{Proof of fusion condition\label{subsec:Proof-of-fusion}}
\begin{lem}
	\emph{$\mathit{DeepICEAlg}$ satisfies the following fusion condition}
	
	\emph{
		\begin{equation}
			\mathit{DeepICE}\left(D,K\right)=f\circ\mathit{nestedCombsAlg}\left(D,K\right)=\mathit{DeepICEAlg}\left(D,K\right)\circ f\times f
		\end{equation}
		where $f=\mathit{min}_{\text{0-1}}\left(D\right)\circ\mathit{eval}^{\prime}\left(K-1\right)$,
		which defines the Deep ICE algorithm \ref{eq: deepice-definition}.}
\end{lem}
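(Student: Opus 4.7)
The plan is to invoke the join-list fusion law (Theorem \ref{thm: join-list fusion law}) with $h = \mathit{nestedCombs}(D,K)$ as the existing homomorphism, algebra $\mathit{nestedCombsAlg}(D,K)$, post-processing map $f = \mathit{min}_{\text{0-1}}(K) \circ \mathit{eval}^{\prime}(K-1)$, and target algebra $\mathit{DeepICEAlg}(D,K)$. Once the fusion condition $f \circ \mathit{nestedCombsAlg}(D,K) = \mathit{DeepICEAlg}(D,K) \circ f \times f$ is established at the algebra level, the fusion law immediately converts the composition $f \circ \mathit{nestedCombs}(D,K)$ into the recursive homomorphism form stated in (\ref{eq: deep ICE abstract}). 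The empty and singleton cases collapse because $D \geq 2$ leaves the nested combination generator with only the trivial value, on which $\mathit{eval}^{\prime}$ returns the maximal loss $N$ and $\mathit{min}_{\text{0-1}}$ is the identity.

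The core of the proof is the recursive (join) case. For two intermediate pairs $u_i = (\mathit{css}_i, \mathit{ncss}_i)$, I would unfold $\mathit{nestedCombsAlg}_3$ to expose the output nested combination set as a disjoint join of three sources: (i) the purely left combinations in $\mathit{ncss}_1$, (ii) the purely right ones in $\mathit{ncss}_2$, and (iii) the cross combinations $\mathit{kcombs}(K, \mathit{css} !! D)$ produced from the freshly merged $D$-combinations that appear at position $D$ under $\mathit{kcombsAlg}(D, \mathit{css}_1, \mathit{css}_2)$. Two routine homomorphism laws then let me push $f$ through this join: $\mathit{eval}^{\prime}$ distributes over list concatenation element-wise, and $\mathit{min}_{\text{0-1}}(A \cup B) = \min(\mathit{min}_{\text{0-1}}(A), \mathit{min}_{\text{0-1}}(B))$. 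Together these yield $f(A \cup B) = \min(f(A), f(B))$, so the left-hand side of the fusion condition splits into a three-way minimum. The right-hand side expands symmetrically: $\mathit{DeepICEAlg}_3 = \mathit{min}_{\text{0-1}}(K) \circ \mathit{eval}^{\prime}(K-1) \circ \mathit{nestedCombsAlg}_3$ rebuilds the same three-way disjoint join from the retained $(\mathit{Css}, \mathit{NCss})$ components of $f(u_1)$ and $f(u_2)$, and applying the same distributivity laws yields the matching minimum termwise.

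The main obstacle is the cross term (iii). Unlike the other two groups, it cannot be collapsed to a precomputed minimum from either subproblem, since its nested combinations only come into existence at the join step. This is exactly why $f$ must carry through the intermediate $(\mathit{Css}, \mathit{NCss})$ structure rather than merely the running best configuration, a design already baked into the specification (\ref{Problem specification}). Verifying that these retained components are sufficient to reconstruct the cross term faithfully amounts to reusing the commuting diagram already proved in Appendix \ref{subsec: Proof of nested generator} for the inner fusion of $\mathit{nestedCombs}$; once that diagram is invoked, the remaining equality follows by routine equational reasoning on the product and composition operators, matching the pattern used in the derivation following (\ref{Problem specification}).
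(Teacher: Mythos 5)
There is a genuine gap in the central recursive-case step. You unfold $\mathit{nestedCombsAlg}_3$ as a ``disjoint join of three sources'': the purely left $\mathit{ncss}_1$, the purely right $\mathit{ncss}_2$, and the cross term $\mathit{kcombs}(K,\mathit{css}\,!!\,D)$. But the algebra is
$\mathit{ncss}=\mathit{kcombsAlg}\bigl(K,\mathit{kcombsAlg}(K,\mathit{ncss}_1,\mathit{ncss}_2),\mathit{kcombs}(K,\mathit{css}\,!!\,D)\bigr)$,
and $\mathit{kcombsAlg}$ is not concatenation: it builds every $k$-combination by joining an $i$-combination from one argument with a $(k-i)$-combination from the other. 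Consequently the joined candidate set contains \emph{mixed} nested combinations (e.g.\ one hyperplane coming from the left subproblem, one from the right, or one from the freshly created $D$-combinations in $\mathit{css}\,!!\,D$), which belong to none of your three groups. Your distribution step $f(A\cup B)=\min\bigl(f(A),f(B)\bigr)$ therefore does not apply to this merge, and the claimed three-way minimum does not faithfully represent the left-hand side of the fusion condition. What actually makes the argument work --- and what the paper argues --- is that $f$ passes the full intermediate $(\mathit{Css},\mathit{NCss})$ structures through unchanged, so $\mathit{DeepICEAlg}_3$ re-runs $\mathit{nestedCombsAlg}_3$ on exactly the same retained structures, evaluates \emph{all} newly merged candidates afresh, and compares them with the sub-optima $\mathit{optcnfg}_1,\mathit{optcnfg}_2$; hence every optimum reachable on the left is among the candidates considered on the right. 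You gesture at this for your cross term (iii), but the same reasoning is needed for the mixed terms your decomposition omits.

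A second, smaller discrepancy: you set out to prove the fusion condition as an \emph{equality}, whereas the paper explicitly notes that with a selector such as $\mathit{min}_{\text{0-1}}$ equality is too strict (ties can make the selected configuration differ even when losses agree) and instead proves the relaxed condition
$f\circ\mathit{nestedCombsAlg}(D,K)\subseteq\mathit{DeepICEAlg}(D,K)\circ f\times f$.
Your proof should either adopt this set-membership relaxation or add an explicit, consistent tie-breaking assumption; as written, the equality claim is stronger than what the argument (or the paper) supports. The base cases are also slightly misdescribed --- the paper's recursion simply returns $\mathit{nestedCombsAlg}_1$ and $\mathit{nestedCombsAlg}_2$ without applying $\mathit{eval}^{\prime}$ --- but that is cosmetic compared with the decomposition issue above.
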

\begin{proof}
	For optimization problem, proving equality is often too strict that
	it rarely holds in practice. Instead, whenever a ``selector'' is used,
	we can relax the fusion condition by replacing the eqaulity as a set
	memership relation \citep{bird2020algorithm}.
	\begin{equation}
		f\circ\mathit{nestedCombsAlg}\left(D,K\right)\subseteq\mathit{DeepICEAlg}\left(D,K\right)\circ f\times f
	\end{equation}
	
	In point-wise style, this is equivalent to 
	\begin{equation}
		f\circ\mathit{nestedCombsAlg}\left(D,K,h\left(xs\right),h\left(ys\right)\right)\subseteq\mathit{DeepICEAlg}\left(D,K,f\left(h\left(xs\right)\right),f\left(h\left(ys\right)\right)\right)
	\end{equation}
	where $h\left(as\right)=\mathit{nestedCombs}\left(D,K,as\right)$.
	
	On the left side of the set membership relation, we first update the
	nested combinations by merging $\mathit{nestedCombs}\left(D,K,ys\right)$
	and $\mathit{nestedCombs}\left(D,K,ys\right)$ using $\mathit{\mathit{nestedCombsAlg}}$
	and then select the optimal $nc$ with respect to $E_{\text{0-1}}$
	by using $\mathit{min}_{\text{0-1}}\left(D\right)\circ\mathit{eval}^{\prime}\left(K-1\right)$.
	
	On the right-hand side, recall that $\mathit{nestedCombs}\left(D,K,as\right):\left[\mathbb{R}^{D}\right]\to\left(\left[\left[C\right]\right],\left[\left[NC\right]\right]\right)$
	returns all possible nested combinations ($K$-combination of hyperplanes)
	$ncss$, all possible combination of data items $css$ ($D$th inner
	list is empty) and $ncss$, and $f\circ h=\mathit{DeepICE}\left(D,K\right)$
	is the specification of the Deep-ICE algorithm. Functions $f\left(h\left(xs\right)\right)$
	and $f\left(h\left(ys\right)\right)$ select the optimal nested-combination
	with respect to $E_{\text{0-1}}$ from all possible nested combinations
	with respect to $xs$ and $ys$, call them $\mathit{optcnfg}_{1}$,
	and $\mathit{optcnfg}_{2}$ with respectively. Then the nested combinations
	are merged together and selected the new optimal configuration $\mathit{optcnfg}^{\prime}$
	by using $\mathit{DeepICEAlg}$. By definition, $\mathit{optcnfg}^{\prime}$
	is obtained by selection the optimal configurations from the newly
	generated combinations and compared with $\mathit{optcnfg}_{1}$,
	and $\mathit{optcnfg}_{2}$ , thus the solutions on the left side
	of the set membership relation must include in the right-hand side
	of the nested combination.
\end{proof}

\subsection{Algorithms}\label{subsec: algorithms}

Algorithm \ref{coreset-selection-method} present the recursive definition
of the Deep-ICE algorithm.

\begin{algorithm}[H]
	\textbf{Input}: $\mathit{ds}$: input data list; $D$: number of features;
	$K$: number of hyperplanes;\\
	
	\textbf{Output}: $\mathit{cnfg}:\left(\mathit{NC},\left\{ 1,-1\right\} ^{K}\right)$—Optimal
	nested combination with respect to $\mathit{ds}$; $\mathit{ncss}:\mathit{NCss}$—All
	possible nested combinations of size less than $K$; \emph{$\mathit{css}:\mathit{Css}$}—All
	possible combinations of size less than $D$.\emph{}\\
	
	\begin{raggedright}
		$\begin{aligned}\mathit{DeepICE} & \left(D,K,\left[\;\right]\right)=\mathit{nestedCombsAlg}_{1}\left(\text{\ensuremath{\left[\right]}}\right)\\
			\mathit{DeepICE} & \left(D,K,\left[a\right]\right)=\mathit{nestedCombsAlg}_{2}\left(\text{\ensuremath{\left[a\right]}}\right)\\
			\mathit{DeepICE} & \left(D,K,xs\cup ys\right)=min_{\text{0-1}}\left(K\right)\circ\mathit{eval}^{\prime}\left(K-1\right)\circ\\
			 &\mathit{nestedCombsAlg}_{3}\left(\mathit{DeepICE}\left(D,K,xs\right),\mathit{DeepICE}\left(D,K,ys\right)\right),
		\end{aligned}
		$\\
		\par\end{raggedright}
	where $\mathit{nestedCombsAlg}$ is defined as\\
	\\
	
	\begin{raggedright}
		$\begin{aligned}\mathit{nestedCombsAlg}_{1} & \left(d,k,\left[\;\right]\right)=\left(\left[\left[\left[\;\right]\right]\right],\left[\left[\left[\;\right]\right]\right]\right)\\
			\mathit{nestedCombsAlg}_{2} & \left(d,k,\left[x_{n}\right]\right)=\left(\left[\left[\left[\;\right]\right],\left[\left[x_{n}\right]\right]\right],\left[\left[\left[\;\right]\right]\right]\right)\\
			\mathit{nestedCombsAlg}_{3} & \left(d,k,\left(css_{1},ncss_{1}\right),\left(css_{1},ncss_{1}\right)\right)=\left(\mathit{setEmpty}\left(D,css\right),ncss\right).
		\end{aligned}
		$\\
		\par\end{raggedright}
	\begin{raggedright}
		where $css=\mathit{kcombsAlg}\left(D,css_{1},css_{2}\right)$, and
		$ncss$ is defined as\\
		\par\end{raggedright}
	$ncss=\begin{cases}
		\left[\left[\left[\;\right]\right]\right] & css!!\left(D\right)=\left[\;\right]\\
		\mathit{kcombsAlg}\left(K,\mathit{kcombsAlg}\left(K,ncss_{1},ncss_{2}\right),\mathit{kcombs}\left(K,css!!D\right)\right) & \text{otherwise}.
	\end{cases}$
	
	\caption{$\mathit{DeepICE}_{\text{rec}}$: DeepICE recursive definition \label{alg:Deep-ICE-algorithm}}
\end{algorithm}

We also provide both the pesudocode for the sequential version \ref{alg:Deep-ICE-sequential}
and D\&C versions \ref{alg:Deep-ICE-D=000026C} of the Deep-ICE algorithms.
\begin{algorithm}[H]
	\textbf{Input}: $\mathit{ds}$: input data list; $D$: number of features;
	$K$: number of hyperplanes;\\
	
	\textbf{Output}: $\mathit{cnfg}_{\text{opt}}:\left(\mathit{NC},\left\{ 1,-1\right\} ^{K}\right)$—Optimal
	nested combination with respect to $\mathit{ds}$; $l_{\text{opt}}$:
	optimal 0-1 loss, $\mathit{hyperAsgn}$: All possible predictions
	of hyperplanes with respect to input list; $\mathit{css}$: all possible
	nested combinations of size smaller than $D$ $\mathit{ncss}$: all
	possible nested combinations of size smaller than $K$;
	\begin{enumerate}
		\item $\mathit{css}=\left[\left[\left[\:\right]\right],\left[\right]^{k}\right]$
		// initialize combinations
		\item $\mathit{ncss}=\left[\left[\left[\:\right]\right],\left[\right]^{k}\right]$
		// initialize nested-combinations
		\item $\mathit{hyperAsgn}=\mathit{empty}\left(\left(\begin{array}{c}
			N\\
			D
		\end{array}\right),N\right)$ / initialize prediction of hyperplanes as a empty $\left(\begin{array}{c}
			N\\
			D
		\end{array}\right)\times N$ matrix
		\item $l_{\text{opt}}=N$ //initialize optimal 0-1 loss
		\item \textbf{for} $n\leftarrow\mathit{range}\left(0,N\right)$ \textbf{do}:
		//\textbf{$\mathit{range}\left(0,N\right)=\left[0,1,\ldots,N-1\right]$}
		\item $\quad$\textbf{for} $j\leftarrow\mathit{reverse}\left(\mathit{range}\left(D,n+1\right)\right)$
		\textbf{do}:
		\item $\quad$$\quad$$\mathit{updates}=\mathit{reverse}\left(\mathit{map}\left(\cup\mathit{ds}\left[n\right],\mathit{css}\left[j-1\right]\right)\right)$
		// the $\mathit{reverse}$ function is used to organize configurations
		in revolving door ordering
		\item $\quad$$\quad$$\mathit{css}\left[j\right]=\mathit{css}\left[j\right]\cup\mathit{updaets}$
		// update $\mathit{css}$ to generate combinations in revolving door
		ordering,
		\item $\quad$$\mathit{hyperAsgn}=\mathit{genModels}\left(\mathit{css}\left[D\right],\mathit{hyperAsgn}\right)$
		// generate positive/negative predictions for each hyperplane in $\mathit{css}\left[D\right]$
		\item $\quad$$\mathit{css}\left[D\right]=\left[\:\right]$ // empty $D$-combinations
		after generation
		\item $\quad$$C_{1}=\left(\begin{array}{c}
			n\\
			D-1
		\end{array}\right)$, $C_{2}=\left(\begin{array}{c}
			n\\
			D
		\end{array}\right)$
		\item $\quad$$\mathit{ncss}^{\prime}=\mathit{kcombs}\left(k,C_{2}-C_{1}\right)$
		\item $\quad$$\mathit{ncss}=\mathit{kcombsAlg}\left(K,\mathit{ncss},\mathit{ncss}^{\prime}\right)$
		\item $\quad$$\mathit{cnfg}^{\prime},l^{\prime}=\mathit{eval}\left(\mathit{ncss}\left[K\right],\mathit{hyperAsgn}\right)$
		// evaluate to the number of misclassification for each size $K$
		nested combination in $\mathit{ncss}\left[K\right]$
		\item $\quad$$\mathit{ncss}\left[K\right]=\left[\:\right]$ // empty size
		$K$ nested-combinations after evaluation
		\item $\quad$\textbf{if} $l^{\prime}\leq l_{\text{opt}}$:
		\item $\quad$$\quad$$l_{\text{opt}}=l^{\prime}$
		\item $\quad$$\quad$$\mathit{cnfg}_{\text{opt}}=\mathit{cnfg}^{\prime}$
		\item \textbf{return} $\mathit{cnfg}_{\text{opt}}$, $l_{\text{opt}}$,
		$\mathit{hyperAsgn}$, $\mathit{ncss}$, $\mathit{css}$
	\end{enumerate}
	\caption{$\mathit{DeepICE}_{\text{seq}}$: Deep-ICE sequential definition \label{alg:Deep-ICE-sequential}}
\end{algorithm}

\begin{algorithm}[H]
	\textbf{Input}: $\mathit{ds}$: input data list; $D$: number of features;
	$K$: number of hyperplanes;\\
	
	\textbf{Output}: $\mathit{cnfg}_{\text{opt}}:\left(\mathit{NC},\left\{ 1,-1\right\} ^{K}\right)$—Optimal
	nested combination with respect to $\mathit{ds}$; $l_{\text{opt}}$:
	optimal 0-1 loss
	\begin{enumerate}
		\item $\mathit{hyperAsgn}=\mathit{empty}\left(\left(\begin{array}{c}
			N\\
			D
		\end{array}\right),N\right)$ // initialize prediction of hyperplanes as a empty $\left(\begin{array}{c}
			N\\
			D
		\end{array}\right)\times N$ matrix
		\item $l_{\text{opt}}=N$ //initialize optimal 0-1 loss
		\item $\mathit{ds}_{i},\mathit{ds}_{j}=\mathit{splitToTwo\left(\mathit{ds}\right)}$//
		split the data set into two half
		\item parallel:
		\item $\quad$$\quad$ $\mathit{res}_{i}=\mathit{DeepICE}_{\text{seq}}\left(D,K,\mathit{ds}_{i}\right)$
		// Process first data list
		\item $\quad$$\quad$ $\mathit{res}_{j}=\mathit{DeepICE}_{\text{seq}}\left(D,K,\mathit{ds}_{j}\right)$
		// Process second data list
		\item sync // Wait for both tasks to complete
		\item // Retrieve results: configuration, loss, hyperplane assignments,
		combinations
		\item $\mathit{cnfg}_{i},l_{i},\mathit{\mathit{hyperAsgn}_{i}},\mathit{css}_{i},\mathit{ncss}_{i}=\mathit{res}_{i}$
		\item $\mathit{cnfg}_{j},l_{j},\mathit{\mathit{hyperAsgn}_{j}},\mathit{css}_{j},\mathit{ncss}_{j}=\mathit{res}_{j}$
		\item $\mathit{css},\mathit{ncss}=\mathit{nestedCombsAlg}_{3}\left(D,K,\left(\mathit{css}_{i},\mathit{ncss}_{i}\right),\left(\mathit{css}_{j},\mathit{ncss}_{j}\right)\right)$
		// Merge: Combine nested combinations from both subsets
		\item $\mathit{hyperAsgn}=\mathit{mergeAsgn}\left(\mathit{\mathit{hyperAsgn}_{i}},\mathit{\mathit{hyperAsgn}_{j}}\right)$
		// Merge hyperplane assignments
		\item $\mathit{cnfg}^{\prime},l^{\prime}=\mathit{eval}\left(\mathit{ncss}\left[K\right],\mathit{hyperAsgn}\right)$
		// Evaluate merged nested combinations for size K
		\item $\mathit{cnfgs}=\left[\left(\mathit{cnfg}_{i},l_{i}\right),\left(\mathit{cnfg}_{j},l_{j}\right),\left(\mathit{cnfg}^{\prime},l^{\prime}\right)\right]$
		// Collect all configurations and their losses
		\item $\left(\mathit{cnfg}_{\text{opt}},l_{\text{opt}}\right)=min_{\text{0-1}}\left(\left[\mathit{cnfgs}\right]\right)$
		// Select configuration with minimum 0-1 loss
		\item \textbf{return} $\mathit{cnfg}_{\text{opt}}$, $l_{\text{opt}}$
	\end{enumerate}
	\caption{$\mathit{DeepICE}_{\text{D\&C}}$: Deep-ICE divide-and-conquer definition
		\label{alg:Deep-ICE-D=000026C}}
\end{algorithm}

Algorithm \ref{coreset-selection-method} shows the structure of
the coreset selection method.

\begin{algorithm}[H]
	\begin{enumerate}
		\item \textbf{Input}: $\mathit{ds}$: input data list; $M$: Block size;
		$R$: number of shuffle time in each filtering process; $L$: Max-heap
		size; $B_{\max}$: Maximum input size for the Deep-ICE algorithm;
		$c\in\left(0,1\right]$: Shrinking factor for heap size
		\item \textbf{Output}: Max-heap containing top $L$ configurations and associated
		data blocks \emph{}\\
		\item Initialize coreset $\mathcal{C}\leftarrow\mathit{ds}$
		\item \textbf{while} $\mathcal{C}\leq B_{\max}$ \textbf{do}:
		\item $\quad$Reshuffle the data, divide $\mathcal{C}$ into $\left\lceil \frac{\left|\mathcal{C}\right|}{M}\right\rceil $
		blocks $\mathcal{C}_{B}=\left\{ C_{1},C_{2},\ldots,C_{\left\lceil \frac{\left|\mathcal{C}\right|}{M}\right\rceil }\right\} $
		\item $\quad$Initialize a size $L$ max-heap $\mathcal{H}_{L}$
		\item $\quad$\textbf{for} $r\leftarrow1$ \textbf{to $R$} \textbf{do}:
		\item $\quad$$\quad$$r=r+1$
		\item $\quad$$\quad$\textbf{for} $C\in\mathcal{C}_{B}$ \textbf{do}:
		\item $\quad\quad$$\quad$$\mathit{cnfg}\leftarrow\mathit{DeepICE}\left(D,K,C\right)$
		\item $\quad$$\quad$$\quad$$\mathcal{H}_{L}.\text{push}\left(\mathit{cnfg},C\right)$
		\item $\quad$ $\quad$$\mathcal{C}\leftarrow\mathit{unique}\left(\mathcal{H}_{L}\right)$
		// \emph{Merge blocks and remove duplicates}
		\item $\quad$$L\leftarrow L\times c$ // Shrink heap size:
		\item $\mathit{cnfg}\leftarrow\mathit{DeepICE}\left(D,K,\mathcal{C}\right)$
		//\emph{ Final refinement}
		\item $\mathcal{H}_{L}.\text{push}\left(\mathit{cnfg},\mathcal{C}\right)$
		\item \textbf{return} $\mathcal{H}_{L}$
	\end{enumerate}
	\caption{Deep-ICE with Coreset Filtering\label{coreset-selection-method}}
\end{algorithm}

\subsection{Complexity analysis \label{subsec:Complexity-analysis}}
\begin{thm}
	\emph{The DeepICE algorithm has a time complexity of $O\left(K\times N\times2^{K-1}\times\left(\begin{array}{c}
			\left(\begin{array}{c}
				N\\
				D
			\end{array}\right)\\
			K
		\end{array}\right)+N\times D^{3}\times\left(\begin{array}{c}
			N\\
			D
		\end{array}\right)\right)$ which is strictly smaller than $O\left(N^{DK+1}\right)$, and a space
		complexity of $O\left(\left(\begin{array}{c}
			\left(\begin{array}{c}
				N\\
				D
			\end{array}\right)\\
			K-1
		\end{array}\right)\times K+\left(\begin{array}{c}
			N\\
			D-1
		\end{array}\right)\times N\right)$, which is strictly smaller than $O\left(N^{D\left(K-1\right)}\right)$.}
\end{thm}
\begin{proof}
	We analyze the complexity using the sequential version of the DeepICE
	algorithm \ref{alg:Deep-ICE-sequential}. At stage $n$, the computation
	of lines 5–8 has complexity $O\left(n^{D-1}\right)$, since there
	are at most $\left(\begin{array}{c}
		n\\
		D-1
	\end{array}\right)$ new $D$-combinations in each recursive step. The computation at
	line 9 requires $O\left(n^{D-1}\times D^{3}\times N\right)$ time.
	Similarly, the new nested combinations at lines 12–14 has a size $O\left(\sum_{k=1}^{K}\left(\begin{array}{c}
		\left(\begin{array}{c}
			n\\
			D-1
		\end{array}\right)\\
		k
	\end{array}\right)\times\left(\begin{array}{c}
		\left(\begin{array}{c}
			n\\
			D
		\end{array}\right)\\
		k
	\end{array}\right)\right)$, which requires computations of a complexity $2^{K-1}\times N\times K$
	per nested combination, as each combination must evaluate $2^{K-1}$
	possible hyperplane orientations.
	
	By Vandermonde’s identity, we have 
	\[
	\sum_{k=1}^{K}\left(\begin{array}{c}
		\left(\begin{array}{c}
			n\\
			D-1
		\end{array}\right)\\
		k
	\end{array}\right)\times\left(\begin{array}{c}
		\left(\begin{array}{c}
			n\\
			D
		\end{array}\right)\\
		k
	\end{array}\right)=\left(\begin{array}{c}
		\left(\begin{array}{c}
			n\\
			D-1
		\end{array}\right)+\left(\begin{array}{c}
			n\\
			D
		\end{array}\right)\\
		k
	\end{array}\right)=\left(\begin{array}{c}
		\left(\begin{array}{c}
			n+1\\
			D-1
		\end{array}\right)\\
		k
	\end{array}\right)\leq\left(n+1\right)^{Dk}
	\]
	Summing over $n=0$ to $n=N-1$, the total time complexity becomes
	\begin{align*}
		& O\left(\sum_{n=0}^{N-1}\left(D^{3}\times N\times\left(\begin{array}{c}
			n\\
			D-1
		\end{array}\right)+K\times N\times2^{K-1}\times\left(\begin{array}{c}
			\left(\begin{array}{c}
				n+1\\
				D-1
			\end{array}\right)\\
			k
		\end{array}\right)\right)\right).\\
		= & O\left(N\times D^{3}\times\left(\begin{array}{c}
			N\\
			D
		\end{array}\right)+K\times N\times2^{K-1}\times\left(\begin{array}{c}
			\left(\begin{array}{c}
				N\\
				D
			\end{array}\right)\\
			K
		\end{array}\right)\right)\\
		\leq & O\left(N\times D^{3}\times N^{D}+K\times N\times2^{K-1}\times N^{DK}\right)\\
		= & O\left(N^{DK+1}\right)
	\end{align*}
	
	For memory, at lines 10 and 15, we clear the size $D$ combinations
	and size $K$ nested combinations, so we only need to store smaller
	configurations in memory. The resulting space complexity is
	\begin{equation}
		O\left(\left(\begin{array}{c}
			\left(\begin{array}{c}
				N\\
				D
			\end{array}\right)\\
			K-1
		\end{array}\right)\times K+\left(\begin{array}{c}
			N\\
			D
		\end{array}\right)\times N\right)=O\left(N^{D\left(K-1\right)}\right).
	\end{equation}
\end{proof}

\subsubsection{Ordered generation of combinations\label{subsec:Generating-combination-in order}}

To generate $D$-combinations of data points efficiently, we employ
a technique that organizes combinations in a specific order, assigning
each a unique ``rank.'' To achieve this, a critical but small function
$\mathit{reverse}$ used at line 6 of the DeepICE algorithm \ref{alg:Deep-ICE-sequential}
makes it possible. This allows $D$-combinations to be organized in
``revolving door ordering'' and thus combinations are represented
by their rank rather than storing the combinations explicitly. This
approach offers two key benefits: First, storing ranks significantly
reduces memory usage, from $M\times D\times64$ bits to $M\times\log\left(M\right)$bits
($\log\left(M\right)$ is often representable using 32 bits in coreset
selection method), where $M=\sum_{k=0}^{K-1}\left(\begin{array}{c}
	\begin{array}{c}
		N\\
		D
	\end{array}\\
	k
\end{array}\right)$. A workspace in memory is preallocated before training to store predictions
associated with these hyperplanes, thereby avoiding memory allocation
overhead during runtime. Second, it enables the organization of hyperplane
predictions into a $\left(\begin{array}{c}
	N\\
	D
\end{array}\right)\times N$ matrix, where each row corresponds to a unique rank. As a result,
the algorithm requires only $O\left(N\times D^{3}\times\left(\begin{array}{c}
	N\\
	D
\end{array}\right)\right)$ time. Moreover, storing hyperplanes in a single large matrix allows
exploitation of high-throughput hardware such as Nvidia GPU Tensor
Cores. Without this method, predictions would need to be recomputed
for each hyperplane, requiring at least $O\left(N\times D^{3}\times\left(\begin{array}{c}
	\left(\begin{array}{c}
		N\\
		D
	\end{array}\right)\\
	K
\end{array}\right)\right)$ time. This strategy reduces memory usage and accelerates execution
without drawbacks, and it can be extended to other problems involving
nested combinatorial structures.

\subsubsection{Memory-Free method by using unranking function}

Building on the first technique, the second method leverages the ordered
structure of $D$-combinations to eliminate the need to store $K$-combinations.
An unranking function takes the rank of a combination as input and
reconstructs the corresponding $K$-combination on demand. This supports
the dynamic generation of combinations for a given range of rank values,
thereby circumventing memory constraints that would otherwise limit
the algorithm due to insufficient storage. However, it incurs an additional
computational cost of $\Theta\left(K\right)$ arithmetic operations
per combination due to the unranking function. Despite this, the method
often improves overall efficiency by simplifying memory management,
leading to more effective implementations in practice.

However, this method has a limitation: it precludes the use of bounding
techniques because $K$-combinations combinations are reconstructed
on demand via unranking functions rather than stored in memory. If
future research requires such techniques, this approach is unsuitable,
as it is challenging to identify which configurations (represented
by ranks) are eliminated during algorithm execution.

\subsubsection{Empirical analysis\label{subsec:Empirical-analysis of run time}}
Figure \ref{fig:run-time-polynomial} shows that the empirical running time of the DeepICE algorithm aligns with the expected worst-case complexity.

\begin{figure}
	\begin{centering}
		\includegraphics[scale=0.25]{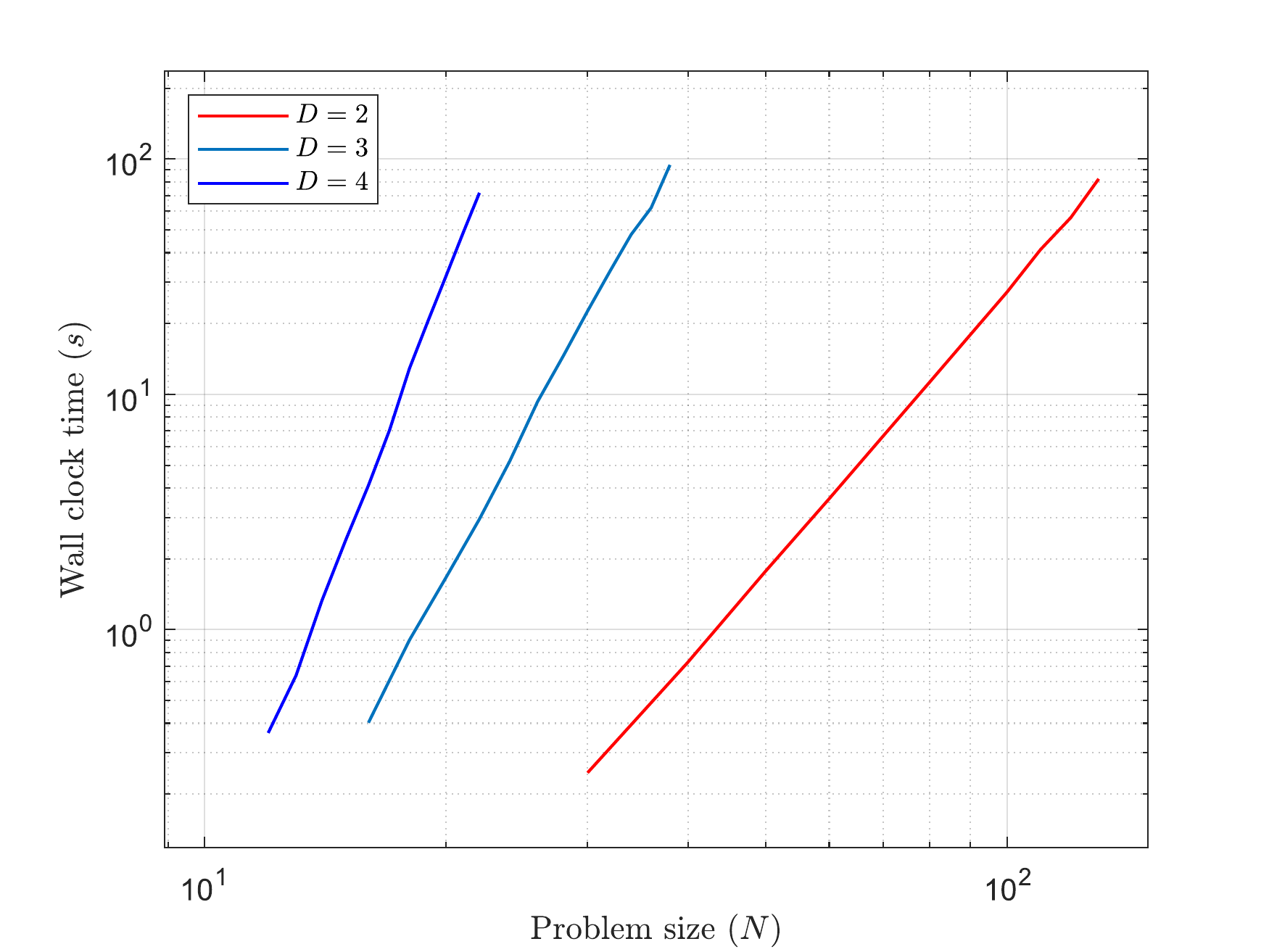}\includegraphics[scale=0.25]{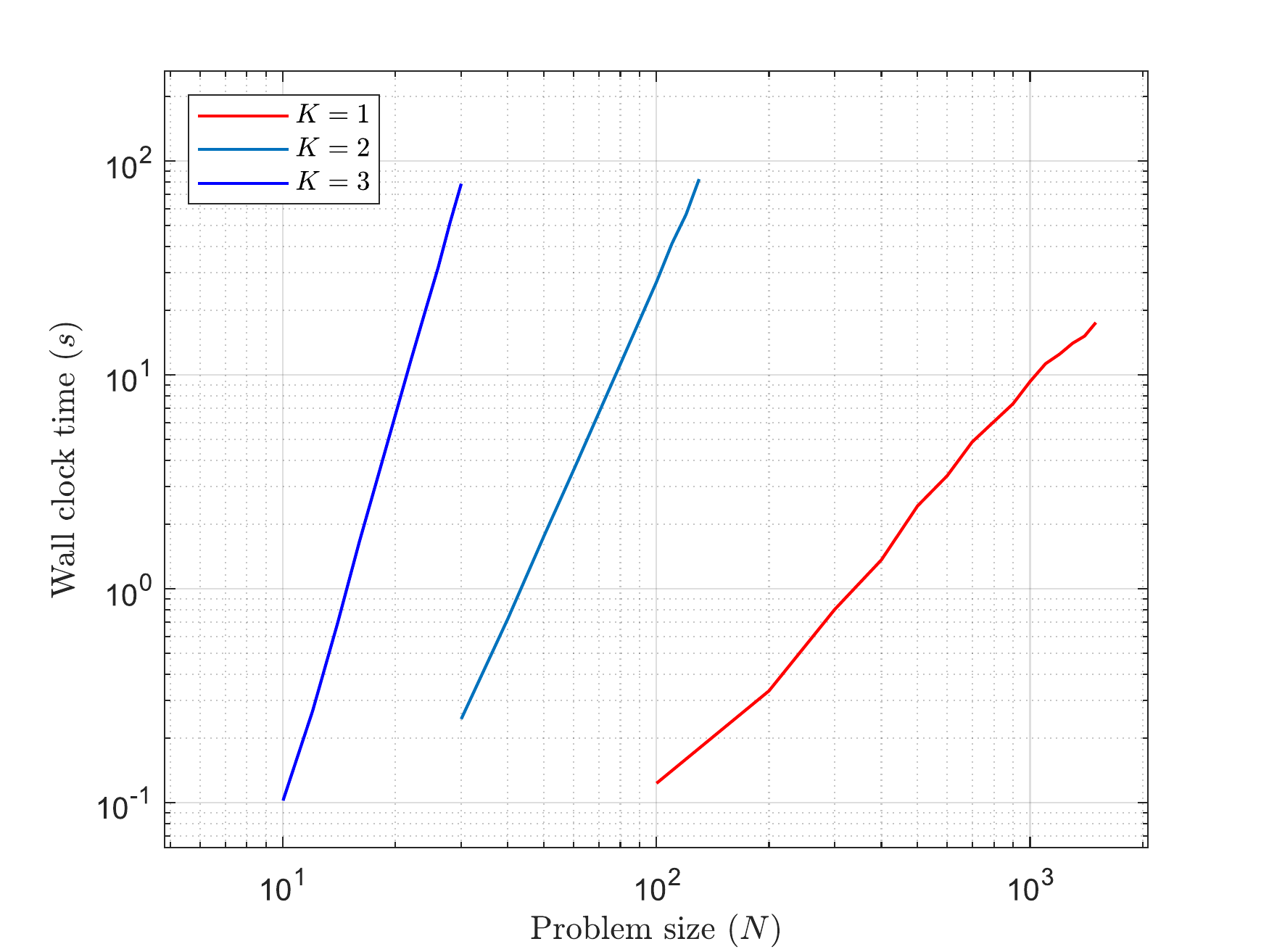}
		\par\end{centering}
	\caption{Empirical analysis shows that the wall-clock runtime of the DeepICE
		algorithm is strictly smaller than the predicted worst-case complexity
		$O\left(N^{DK+1}\right)$. The log-log wall-clock runtime (seconds)
		of DeepICE on synthetic datasets is plotted against dataset size $N$.
		On this log-log scale polynomial run time appears as a linear function
		of problem size $N$, and the slope of the line corresponds to the
		polynomial degree. In the left panel, the runtime curves (from left
		to right) correspond to $K=2$ with $D=2,3,4$, and have slopes 3.96,
		6.28, and 8.88—smaller than the predicted worst-case exponents $O\left(N^{4}\right)$,
		$O\left(N^{7}\right)$, $O\left(N^{9}\right)$. In the right panel,
		the curves (from left to right) correspond to $D=2$ with $K=1,2,3$
		respectively), and have slopes 1.91, 3.95, and 6.11—smaller than the
		predicted worst-case exponents $O\left(N^{3}\right)$, $O\left(N^{5}\right)$,
		$O\left(N^{7}\right)$, respectively,. \label{fig:run-time-polynomial}}
	
\end{figure}

\subsection{Additional experiments \label{subsec:Additional-experiments}}

\subsubsection{Comparison with expected accuracy optimization (EXACT) framework}

\begin{table}
	\scriptsize
	\caption{Five-fold cross-validation results on the UCI dataset. We compare
		the performance of our Deep-ICE algorithm, with $K$ (number of hyperplanes)
		ranging from 1 to 3, trained either with the coreset selection method or directly (marked by {*})—against
		\citet{karpukhin2024exact}'s expected accuracy optimization (EXACT)
		framework. Results are reported as mean accuracy loss over training
		and test sets in the format: Training Error / Test Error (Standard
		Deviation: Train / Test). The best-performing algorithm in each row
		is highlighted in bold.\label{tab: compare with EXACT}}
	
	\begin{center} 
		\begin{tabular}{@{} 
				>{\raggedright}p{0.03\textwidth}  
				>{\raggedleft}p{0.03\textwidth} 
				>{\raggedleft}p{0.02\textwidth} 
				>{\raggedleft}p{0.1\textwidth} 
				>{\raggedleft}p{0.1\textwidth} 
				>{\raggedleft}p{0.1\textwidth} 
				>{\raggedleft}p{0.09\textwidth} 
				>{\raggedleft}p{0.09\textwidth} 
				>{\raggedleft}p{0.1\textwidth} 
				@{}}
			
			Dataset & $N$ & $D$ & Deep-ICE (\%) ($K=1$) &  Deep-ICE (\%) ($K=2$) &  Deep-ICE (\%) ($K=3$) & EXACT (\%) ($K=1$) &  EXACT (\%) ($K=2$)  &  EXACT (\%) ($K=3$)  			\tabularnewline
			\midrule  
			\addlinespace[0.5cm]
			Ai4i & 10000 & 6 &  
				\centering
				97.45/97.40
				
				(0.10/0.36)
			 
				&
				\textbf{97.90}/\textbf{97.82}
				
				(0.01/0.35)
			 
				&
				97.71/97.71
				
				(0.10/0.25)
			 
				&
				96.61/96.61
				
				(0.01/0.02)
			 
				&
				96.63/96.60
				
				(0.04/0.03)
			 
				&
				96.69/96.62
				
				(0.10/0.05)
		\tabularnewline
 
			Caesr & 72 & 5 &  
				\centering
				{*}74.55/82.67
				
				(7.18/16.11)
			 
				&
				\textbf{89.45}/\textbf{88.00}
				
				(4.21/9.80)
			 
				&
				84.36/86.67
				
				(7.51/5.96)
			 
				&
				79.50/69.24
				
				(2.44/12.59)
			 
				&
				81.94/62.38
				
				(2.65/10.03)
			 
				&
				87.83/64.00
				
				(2.74/8.72)
			 \tabularnewline
			VP & 704 & 2 &  
				\centering
				{*}96.94/\textbf{97.59}
				
				(0.44/1.46)
			 
				&
				97.76/\textbf{97.59}
				
				(0.41/1.65)
			 
				&
				\textbf{97.80}/97.45
				
				(0.43/1.71)
			 
				&
				92.47/92.47
				
				(0.08/0.34)
			 
				&
				92.47/92.47
				
				(0.08/0.34)
			 
				&
				92.47/92.47
				
				(0.08/0.34)
			 \tabularnewline
 
			Spesis & 975 & 3 &  
				\centering
				{*}94.47/92.88
				
				(0.10/0.61)
			 
				&
				\textbf{96.43}/95.26
				
				(0.49/1.82)
			 
				&
				96.24/\textbf{95.36}
				
				(0.22/1.62)
			 
				&
				94.05/94.05
				
				(0.06/0.25)
			 
				&
				94.05/94.05
				
				(0.06/0.25)
			 
				&
				94.05/94.05
				
				(0.06/0.25)
			 \tabularnewline
 
			HB & 283 & 3 &  
				\centering
				{*}77.18/75.44
				
				(0.45/2.48)
			 
				&
				80.11/77.19
				
				(0.74/2.48)
			 
				&
				\textbf{80.85}/\textbf{78.53}
				
				(1.02/3.57)
			 
				&
				75.70/71.39
				
				(1.94/3.86)
			 
				&
				77.74/73.45
				
				(0.55/5.89)
			 
				&
				78.71/71.36
				
				(1.66/3.56)
			 \tabularnewline
 
			BT & 502 & 4 &  
				\centering
				{*}77.13/76.36
				
				(1.46/2.71)
			 
				&
				\textbf{79.59}/\textbf{77.98}
				
				(0.62/3.38)
			 
				&
				79.36/77.98
				
				(0.59/2.88)
			 
				&
				77.84/73.51
				
				(1.31/2.80)
			 
				&
				78.09/73.51
				
				(1.54/2.36)
			 
				&
				78.14/73.11
				
				(1.56/3.09)
			 \tabularnewline
 
			AV & 2342 & 7 &  
				\centering
				89.89/88.52
				
				(0.33/1.56)
			 
				&
				\textbf{90.34}/89.04
				
				(0.15/1.39)
			 
				&
				89.77/88.76
				
				(0.33/1.75)
			 
				&
				87.18/87.18
				
				(0.03/0.11)
			 
				&
				87.26/87.03
				
				(0.19/0.37)
			 
				&
				87.70/87.13
				
				(0.34/0.40)
			 \tabularnewline
 
			SO & 1941 & 27 &  
				\centering
				\textbf{77.77}/\textbf{76.03}
				
				(0.43/0.83)
			 
				&
				77.13/75.33
				
				(0.81/1.32)
			 
				&
				76.66/74.95
				
				(0.74/1.38)
			 
				&
				76.33/73.11
				
				(0.26/1.82)
			 
				&
				78.81/75.22
				
				(1.68/1.77)
			 
				&
				79.95/75.32
				
				(2.16/2.31)
			 \tabularnewline
 
			DB & 1146 & 9 &  
				\centering
				78.78/79.69
				
				(0.41/0.69)
			 
				&
				83.60/\textbf{81.37}
				
				(0.43/2.52)
			 
				&
				\textbf{83.88}/81.32
				
				(0.98/2.23)
			 
				&
				73.91/70.59
				
				(1.15/3.52)
			 
				&
				73.91/68.93
				
				(3.83/4.31)
			 
				&
				75.94/72.42
				
				(1.73/2.30)
			 \tabularnewline
 
			RC & 3810 & 7 &  
				\centering
				93.88/92.45
				
				(0.28/1.02)
			 
				&
				\textbf{93.91}/\textbf{93.10}
				
				(0.24/1.02)
			 
				&
				93.94/92.98
				
				(0.21/0.98)
			 
				&
				93.03/92.62
				
				(0.32/1.08)
			 
				&
				93.08/92.60
				
				(0.29/1.16)
			 
				&
				93.08/92.76
				
				(0.35/1.24)
			 \tabularnewline
 
			SS & 51433 & 3 & 
				\centering
				86.57/86.72
				
				(0.03/0.15)
			 
				&
				\textbf{86.60}/\textbf{86.72}
				
				(0.04/0.16)
			 
				&
				86.59/86.70
				
				(0.03/0.11)
			 
				&
				86.57/86.53
				
				(0.04/0.2)
			 
				&
				86.55/86.53
				
				(0.05/0.17)
			 
				&
				86.56/86.53
				
				(0.05/0.17)
\tabularnewline

		\end{tabular}
\par\end{center}
\end{table}

This Subsection we compared with the expected accuracy optimization
(EXACT) method proposed by \citet{karpukhin2024exact} the results
is shown in table \citet{karpukhin2024exact}.

\subsubsection{Wall-clock run time comparison}

\begin{table}
	\scriptsize
	\caption{Running time (seconds) of each algorithm, with ``0.01$<$'' denotes a
		time smaller than 0.01 seconds. In principle, allocating more computational
		resources to DeepICE yields better solutions. For comparison, we record
		the wall-clock time at which DeepICE first obtains a solution with
		lower 0–1 loss than the other methods. The reported times are the
		medians over three runs. \label{tab:Running-time-(seconds)}}
	
	\begin{center} 
		\begin{tabular}{@{} 
				>{\raggedright}p{0.03\textwidth}  
				>{\raggedleft}p{0.03\textwidth} 
				>{\raggedleft}p{0.02\textwidth} 
				>{\raggedleft}p{0.06\textwidth} 
				>{\raggedleft}p{0.06\textwidth} 
				>{\raggedleft}p{0.06\textwidth} 
				  >{\raggedleft}p{0.04\textwidth} 
				>{\raggedleft}p{0.04\textwidth} 
				>{\raggedleft}p{0.04\textwidth} 
				>{\raggedleft}p{0.04\textwidth}
				>{\raggedleft}p{0.04\textwidth} 
				>{\raggedleft}p{0.04\textwidth} 
				>{\raggedleft}p{0.04\textwidth} 
				@{}}
			
			Dataset & $N$ & $D$ & Deep-ICE (s) ($K=1$) &  Deep-ICE (s) ($K=2$) &  Deep-ICE (s) ($K=3$) &SVM (s) & MLP (s) ($K=1$) &  MLP (s) ($K=2$)  &  MLP (\%) ($K=3$) & EXACT (s) ($K=1$) &  EXACT (s) ($K=2$)  &  EXACT (s) ($K=3$)  		\tabularnewline
			\midrule  
Ai4i & 10000 & 6 & 450.5 & 622.42 & 505.42 & 0.05 & 18.43 & 18.74 & 16.52 & 22.56 & 21.91 & 26.47\tabularnewline
Caesr & 72 & 5 & 0.26 & 0.26 & 7.10 & 0.01$<$ & 12.27 & 12.96 & 10.11 & 23.78 & 22.92 & 29.80\tabularnewline
VP & 704 & 2 & 0.83 & 0.45 & 0.85 & 0.01$<$ & 11.19 & 11.53 & 10.76 & 24.36 & 23.49 & 25.22\tabularnewline
Spesis & 975 & 3 & 8.00 & 0.21 & 0.41 & 0.01$<$ & 9.56 & 10.60 & 11.68 & 25.24 & 23.01 & 29.93\tabularnewline 
HB & 283 & 3 & 0.20 & 0.21 & 0.38 & 0.01$<$ & 12.54 & 14.43 & 17.68 & 22.65 & 23.01 & 25.30\tabularnewline
BT & 502 & 4 & 0.26 & 0.36 & 0.43 & 0.01$<$ & 13.45 & 12.34 & 15.24 & 25.46 & 24.36 & 27.99\tabularnewline 
AV & 2342 & 7 & 132.51 & 294.81 & 356.51 & 0.01$<$ & 14.53 & 14.12 & 13.21 & 22.86 & 23.02 & 28.19\tabularnewline
SO & 1941 & 27 & 762.50 & 850.42 & 543.54 & 0.04 & 12.43 & 13.23 & 14.53 & 24.12 & 25.61 & 29.36\tabularnewline
DB & 1146 & 9 & 50.43 & 20.39 & 16.77 & 0.01 & 14.78 & 16.20 & 17.43 & 26.45 & 22.57 & 27.35\tabularnewline
RC & 3810 & 7 & 423.5 & 217.26 & 611.37 & 0.02 & 15.02 & 17.02 & 14.53 & 24.68 & 22.77 & 27.21\tabularnewline
SS & 51433 & 3 & 1.13 & 3.26 & 4.21 & 9.63 & 43.19 & 73.19 & 77.43 & 25.94 & 21.86 & 26.00\tabularnewline

		\end{tabular}
		\par\end{center}
\end{table}

Table \ref{tab:Running-time-(seconds)} report the run-time comparison
of between DeepICE, SVM, MLP and EXACT.

\subsection{Experiments of exhaustively exploring all solutions\label{subsec:Experiments-of-exhuastively}}

For $K=1$ case, i.e., linear case, the Deep-ICE algorithm fully explores
the solution space for datasets such as Voicepath, Caesarian, Sepsis,
HB, and BT. We output all solutions whose training accuracy is lower
than that of the SVM. The regularization parameter for the SVM is
fixed at 1 across all datasets. We deliberately avoid tuning this
parameter to achieve the lowest test error, as a solution with lower
test accuracy may increase training error, thereby generating more
candidate solutions due to the higher training error. Adjusting the
regularization parameter introduces a trade-off between training and
test errors, complicating the analysis. To keep our discussion focused
and consistent, we fix the regularization parameter. We summarize
the empirical results in Table \ref{subsec:Experiments-of-exhuastively}.
Using the generated solutions, we construct hyperplanes and select
two representative types from each equivalence class: (1) hyperplanes
passing through exactly $D$ points (direct hyperplanes), and (2)
arbitrary hyperplanes computed via linear programming (LP hyperplanes).
In the table, we compare the out-of-sample performance of these solutions
against that of the SVM. We found no strong evidence that the maximal-margin
hyperplane (SVM) consistently outperforms other hyperplanes with lower
training errors. For example, in the HB dataset, an average of 8,922.2
solutions outperform the SVM in training dataset. Of these, direct
hyperplanes have an average of 5,448.2 solutions, and LP hyperplanes
have an average of 6,165.6 solutions, outperforming the SVM in out-of-sample
test.

\begin{table}[h]
	\centering
	\caption{Comparing the average out-of-sample accuracy in a 5-fold cross-validation.
		All solutions with training accuracy lower than that of the SVM are
		generated, and their total number is reported (Total number of solutions).
		Two representative hyperplanes from the equivalence classes are included:
		the \emph{direct} \emph{hyperplane}, which passes through exactly
		$D$ points, and the \emph{LP} \emph{hyperplane}, computed via linear
		programming. For each type, the average number of hyperplanes with
		out-of-sample accuracy lower than that of the SVM is also reported.
		}
		\label{tab: exhaustive-generation}
	\begin{tabular}{c c c c }

		Datasets & Total number of solutions & Direct hyperplanes & LP hyperplanes
		\\ \hline \\ 

		Caesarian & 4430.2 & 2379.4 & 2913.8 \\
		Voicepath & 124.2 & 55.2 & 54.8  \\

		Spesis & 5.8 & 1 & 4.6  \\

		HB & 8922.2 & 5448.2 & 6165.6 \\

		BT & 5150.4 & 3189.8 & 3580 \\

	\end{tabular}
\end{table}

\end{document}